\pdfoutput=1

\documentclass[review,11pt]{elsarticle}

\linespread{1.5}

\usepackage[margin=2.5cm]{geometry}

\usepackage[utf8]{inputenc}

\usepackage{graphicx}
\usepackage{caption}
\usepackage{subfig}
\usepackage{multirow}
\usepackage{url}
\usepackage{xcolor}

\usepackage{amssymb}
\usepackage{array}

\usepackage{amsthm}
\usepackage{amsmath}
\usepackage{algorithm}
\usepackage{algorithmic}
\usepackage{bbm}
\usepackage{makecell}

\newtheorem{theor}{Theorem}

\usepackage{enumitem}
\newlist{Properties}{enumerate}{2}
\setlist[Properties]{label=Property \arabic*., font=\textbf, itemindent=*}

\begin{document}

\begin{frontmatter}{}

\title{A $k$-additive Choquet integral-based approach to approximate the SHAP values for local interpretability in machine learning}

\author[aff1,aff2]{Guilherme Dean Pelegrina\corref{fn1}}
\ead{guidean@unicamp.br}
\author[aff1]{Leonardo Tomazeli Duarte}
\ead{leonardo.duarte@fca.unicamp.br}
\author[aff2,aff3]{Michel Grabisch}
\ead{michel.grabisch@univ-paris1.fr}

\cortext[fn1]{Corresponding author}

\address[aff1]{School of Applied Sciences - University of Campinas, Limeira, Brazil}
\address[aff2]{Centre d’Économie de la Sorbonne - Université Paris I Panthéon-Sorbonne, Paris, France}
\address[aff3]{Paris School of Economics - Université Paris I Panthéon-Sorbonne, Paris, France}

\begin{abstract}
 Besides accuracy, recent studies on machine learning models have been addressing the question on how the obtained results can be interpreted. Indeed, while complex machine learning models are able to provide very good results in terms of accuracy even in challenging applications, it is difficult to interpret them. Aiming at providing some interpretability for such models, one of the most famous methods, called SHAP, borrows the Shapley value concept from game theory in order to locally explain the predicted outcome of an instance of interest. As the SHAP values calculation needs previous computations on all possible coalitions of attributes, its computational cost can be very high. Therefore, a SHAP-based method called Kernel SHAP adopts an efficient strategy that approximate such values with less computational effort. In this paper, we also address local interpretability in machine learning based on Shapley values. Firstly, we provide a straightforward formulation of a SHAP-based method for local interpretability by using the Choquet integral, which leads to both Shapley values and Shapley interaction indices. Moreover, we also adopt the concept of $k$-additive games from game theory, which contributes to reduce the computational effort when estimating the SHAP values. The obtained results attest that our proposal needs less computations on coalitions of attributes to approximate the SHAP values.

\end{abstract}

\begin{keyword}
Local interpretability; Choquet integral; Machine learning; Shapley values
\end{keyword}

\end{frontmatter}{}

\section{Introduction}
\label{sec:intro}

In the last decade, Machine Learning (ML) models have been used to deal with problems that directly affect people's life, such as consumer credit scoring~\citep{Kruppa2013}, cybersecurity~\citep{Xin2018}, disease detection~\citep{Ahsan2022} and patient care evaluation~\citep{BenIsrael2020}. Aiming at dealing with such problems, complex ML models have been proposed to achieve good solutions in terms of accuracy. Examples include random forests~\citep{Fawagreh2014,Biau2016}, deep neural networks~\citep{LeCun2015,Goodfellow2016} and gradient boosting algorithms~\citep{Bentejac2021}. Despite the good performance in terms of accuracy, these models act as black box models, since the obtained results (predictions and/or classifications) are difficult to be interpreted. Therefore, there is an inherent trade-off between adopting an accurate model, whose structure is frequently complex, or an interpretable model, such as linear/logistic regression~\citep{Molnar2021}.

Interpretability plays an important role in machine learning-based automatic decisions and has been discussed in several recent works in the ML community~\citep{Lipton2018,Gilpin2018,Carvalho2019,Molnar2021,Setzu2021}. As stated by~\citet{Miller2019}, interpretability can be defined as ``\textit{the degree to which an observer can understand the cause of a decision}''. Therefore, we can argue that interpretability is as important as accuracy in automatic decisions since it can show if the model can or cannot be trusted. For example, suppose a situation in which a person asks for a credit to his/her bank manager. Moreover, suppose that, after an internal analysis based on a machine learning model, the bank system classifies that person as a possible default and, as a consequence, he/she would not receive the credit. He/she will naturally ask to the bank manager why such a classification was achieved. If the machine is a black box, the manager would not be able to explain such a classification and, therefore, the client may not to trust the algorithm. Therefore, in this situation, a local interpretation would be suitable to understand how each characteristic (e.g., salary, presence or absence of previous default, etc...) contributed to the default credit classification.

There are practically two main types of interpretability in machine learning: global and local ones (see~\citep{Molnar2021} for a further discussion on them). The aim of global interpretability methods consists in explaining the trained model as a whole. In other words, one attempts to derive the average behavior of a trained machine by taking all samples. An example of such a method is the partial dependence plot~\citep{Molnar2021}, whose goal is to provide the marginal effects that each feature has in the predicted outcome. On the other hand, methods for local interpretability attempts to explain, for a specific instance of interest (e.g., a person asking for a credit), how each attribute's value contributes to achieve the associated prediction or classification. In this paper, we deal with local interpretability. Moreover, we consider a model-agnostic approach, i.e., a method that can be applied to interpret the prediction or classification of any machine learning model.

Among the model-agnostic methods proposed in the literature, two are of interest in this paper: LIME (Local Interpretable Model-agnostic Explanations)~\citep{Ribeiro2016} and SHAP (SHapley Additive exPlanations)~\citep{Lundberg2017}. In summary, the idea in LIME to understand the prediction of a specific instance consists in, locally, adjusting an interpretable function (e.g., a linear model) based on a set of perturbed samples in the neighborhood of the instance of interest. When adjusting this linear function, one considers an exponential kernel that ensures that closer are the perturbed samples from the instance of interest, greater are their importance in the learning procedure. Although this function may not be complex enough to explain the model as a whole, it can locally provide a good understanding of the contribution of each attribute in the model prediction. The other approach, called SHAP, brings concepts from game theory to provide local interpretability. The idea is to explain a prediction by means of the Shapley value~\citep{Shapley1953} associated with each attribute value. An interesting aspect in such an approach, which leads to the SHAP values\footnote{In this paper, since the SHAP values are referred to as the Shapley values obtained by means of the SHAP formulation, we will frequently adopt SHAP values or Shapley values interchangeably in the context of machine learning interpretability.}, is that it satisfies desired properties in interpretability, such as local accuracy, missingness and consistency~\citep{Lundberg2017}. For that reason, the classical SHAP and its extended versions have been largely used in the literature~\citep{Lundberg2020,Chen2021,Aas2021}.

Although the Shapley value (as well as the SHAP value in SHAP method) appears as an interesting solution for model-agnostic machine learning interpretability, there is a drawback in its calculation. Since it lies on the marginal contribution of each attribute by taking into account all possible coalitions of attributes, the number of evaluations increases exponentially with the number of attributes. Precisely, if we have $m$ attributes, we need $2^m$ evaluations to calculate the Shapley values. This makes the calculation impracticable in situations where $m$ is large. In order to soften this inconvenience, one may adopt some approaches that approximate the Shapley values, such as the Shapley sampling values strategy~\citep{Strumbelj2010,Strumbelj2014} or the Kernel SHAP~\citep{Lundberg2017}. We address the latter in this paper.

The Kernel SHAP was also proposed in the original SHAP paper~\citep{Lundberg2017}. It provides a link between LIME and the use of SHAP values for local machine learning interpretability. Although the authors provide this link by assuming an additive function as the interpretable model and a specific kernel, the formulation is not straightforward and there is a lack of details in the proof. With respect to the SHAP values calculation, in order to reduce the computational effort, the authors adopted a clever strategy that selects the evaluations that are most promising to approximate such values. However, this strategy does not reduce the number of evaluations needed for an exact SHAP values calculation. Moreover, the authors do not assume further game theory concepts that could speed up the convergence.

Aiming at providing a straightforward formulation of a Kernel SHAP-based method and speeding up the SHAP values approximation, in this paper, we propose to adopt game theory-based concepts frequently used in multicriteria decision making: the Choquet integral~\citep{Choquet1954,Grabisch1996,Grabisch2010} and $k$-additive games~\citep{Grabisch1997b}. Instead of assuming an additive function as the interpretable model, as a first contribution of this paper, we show that the use of the non-additive function called Choquet integral also leads to the same desired properties for local interpretability. Indeed, we can directly associate the Choquet integral parameters to the Shapley indices, which include both Shapley values and Shapley interaction indices. While the Shapley values indicate the marginal contribution of each attribute, individually, the Shapley interaction indices provide the understanding about how they interact between them (positively or negatively). This is of interest in ML interpretability since it indicates if the simultaneous presence of two characteristics has a higher (or lower) contribution than both of them isolated. It is worth mentioning that~\citet{Lundberg2020} also discuss how the Shapley inter SHAP method could be adapted to find the Shapley interaction indices. However, in our proposal, they are obtained automatically.

Besides the aforementioned formulation, we can also assume some degree of additivity about the Choquet integral which contributes to reduce its number of parameters. Therefore, as a second contribution, we propose to adopt a $k$-additive Choquet integral. For instance, $2$-additive models have proved flexible enough to achieve good results in terms of generalization~\citep{Grabisch2002,Grabisch2006,Pelegrina2020}. As attested by numerical experiments, by reducing the number of parameters, we also decrease the number of evaluations needed to approximate the SHAP values.

The rest of this paper is organized as follow. Section~\ref{sec:theor} contains the theoretical aspects of Shapley values and the adopted Choquet integral. We also provide a description of LIME and SHAP as model-agnostic methods for local interpretability. In Section~\ref{sec:prop}, we present our Choquet integral-based formulation that leads to the Shapley interaction indices and how the concept of $k$-additive games can be used to reduce the effort when estimating the SHAP values. Thereafter, in Section~\ref{sec:exp}, we conduct some numerical experiments in order to attest our proposal. Finally, in Section~\ref{concl}, we present our concluding remarks and discuss future perspectives.

\section{Background}
\label{sec:theor}

In this section, we present some theoretical aspects that will be used along this work. We start by some concepts frequently used in game theory and multicriteria decision making. Thereafter, we discuss both LIME and SHAP as well as the SHAP values approximation strategy used in Kernel SHAP.

It is worth recalling that, in this paper, we deal with local interpretability. Therefore, we consider a classical machine learning scenario where a model $f(\cdot)$ (e.g., a black box model) has been trained based on a set of $n_{tr}$ training data $\left(\mathbf{X},\mathbf{y} \right)$, where $\mathbf{X} = \left[\mathbf{x}_1, \ldots, \mathbf{x}_{n_{tr}} \right]$ and $\mathbf{y} = \left[y_1, \ldots, y_{n_{tr}} \right]$ represent the inputs and the output (e.g., a predicted value or a class), respectively. Our aim consists in explaining the predicted outcome $f(\mathbf{x}^*)$ of the instance of interest $\mathbf{x}^* = \left[x_1^*, \ldots, x_m^* \right]$, where $m$ is the number of attributes. Therefore, how $f(\cdot)$ was trained is not important in this paper. We only consider that we are able to use the model $f(\cdot)$ in order to predict the outcome of any instance.

\subsection{Shapley values and $k$-additive games}
\label{subsec:shapley}

In cooperative game theory, a coalitional game is defined by a set $M=\left\{1, 2, \ldots, m\right\}$ of $m$ players and a function $\upsilon: \mathcal{P}(M) \rightarrow \mathbb{R}$, where $\mathcal{P}(M)$ is the power set of $M$, that maps subsets of players to real numbers. For a coalition of players $A$, $\upsilon(A)$ represents the payoff that this coalition can obtain by cooperation. By definition, one assumes $\upsilon(\emptyset) = 0$, i.e., there is no payoff when there is no coalition.

The Shapley value (or Shapley power index) is a well-known solution concept in cooperative game theory~\citep{Shapley1953}. In summary, the Shapley value of a player $j$ indicates its (positive or negative) marginal contribution in the game payoff when taking into account all possible coalitions of players in $M$. It is defined as follows:
\begin{equation}
\label{eq:power_ind_s}
\phi_{j} = \sum_{A \subseteq M\backslash \left\{j\right\}} \frac{\left(m-\left|A\right|-1\right)!\left|A\right|!}{m!} \left[\upsilon(A \cup \left\{j\right\}) - \upsilon(A) \right],
\end{equation}
where $\left| A \right|$ represents the cardinality of subset $A$. An interesting property of the Shapley value (called efficiency, which will be further discussed in this paper) is that $\sum_{j=1}^m \phi_j = \upsilon(M) - \upsilon(\emptyset)$. For this reason, the Shapley value is a convenient way of sharing the payoff of the grand coalition between the players.

Similarly as in Equation~\eqref{eq:power_ind_s}, one may also measure the marginal effect of a coalition $\left\{j,j'\right\}$ in the payoffs. In this case, one obtains the Shapley interaction index, which is defined by~\citep{Murofushi1993,Grabisch1997a}
\begin{equation}
\label{eq:int_pair_ind_s}
I_{j,j'} = \sum_{A \subseteq M\backslash \left\{j,j'\right\}} \frac{\left(m-\left|A\right|-2\right)!\left|A\right|!}{\left(m-1\right)!} \left[\upsilon(A \cup \left\{j,j'\right\}) - \upsilon(A \cup \left\{j\right\}) - \upsilon(A \cup \left\{j'\right\}) + \upsilon(A)\right]
\end{equation}
and can be interpreted as the interaction degree of coalition $\left\{j,j'\right\}$ by taking into account all possible coalitions of players in $M$. The sign of $I_{j,j'}$ indicates the type of interaction between players $j,j'$:
\begin{itemize}

\item If $I_{j,j'} < 0$, there is a negative interaction (also called redundant effect) between players $j,j'$.

\item If $I_{j,j'} > 0$, there is a positive interaction (also called complementary effect) between players $j,j'$.

\item If $I_{j,j'} = 0$, there is no interaction players $j,j'$.

\end{itemize}

Besides $\phi_{j}$ and $I_{j,j'}$, one may also define the interaction index for any $A \subseteq M$. In this case, the (generalized) interaction index is defined by~\citep{Grabisch1997a}
\begin{equation}
\label{eq:int_ind_s}
I(A) = \sum_{D \subseteq M\backslash A} \frac{\left(m-\left|D\right|-\left|A\right|\right)!\left|D\right|!}{\left(m-\left|A\right|+1\right)!} \left( \sum_{D' \subseteq A} \left(-1\right)^{\left| A \right| - \left| D' \right|}\upsilon(D \cup D') \right).
\end{equation}
However, one does not have a clear interpretation as for $\phi_{i}$ and $I_{i,i'}$.

It is important to remark that, given the interaction indices $I(A)$, one may recover the payoffs $\upsilon(A)$ through the linear transformation
\begin{equation}
\label{eq:iitomu_s}
\upsilon(A) = \sum_{D \subseteq M} \gamma^{\left|D \right|}_{\left| A \cap D \right|}I(D),
\end{equation}
where $\gamma^{\left|D \right|}_{\left| A \cap D \right|}$ is defined by
\begin{equation}
\label{eq:gamma}
\gamma^{r'}_{r} = \sum_{l=0}^{r}\binom{r}{l}\eta_{r'-l},
\end{equation}
with
\begin{equation}
\eta_{r} = -\sum_{r'=0}^{r-1}\frac{\eta_{r'}}{r-r'+1}\binom{r}{r'}
\end{equation}
being the Bernoulli numbers and $\eta_0=1$. Since the relation between the game and the interaction indices is linear, it is common to represent the aforementioned transformations using matrix notation. Assume, for instance, that the vectors $\upsilon = [\upsilon(\emptyset), \upsilon(\left\{ 1 \right\}), \ldots, \upsilon(\left\{ m \right\}), \upsilon(\left\{ 1,2 \right\}), \ldots, \upsilon(\left\{ m-1,m \right\}), \ldots,$ $\upsilon(\left\{ 1, \ldots, m \right\}) ]$ and $\mathbf{I} = \left[I(\emptyset), \phi_1, \ldots, \phi_m, I_{1,2}, \ldots, I_{m-1,m}, \ldots, I(\left\{ 1, \ldots, m \right\}) \right]$ are represented in a cardinal-lexicographic order (i.e., the elements are sorted according to their cardinality and, for each cardinality, based on the lexicographic order). The transformation from the interaction indices to $\upsilon$ can be represented by $\upsilon = \mathbf{T} \mathbf{I}$, where $\mathbf{T} \in \mathbb{R}^{2^{m} \times 2^{m}}$ is the transformation matrix. For example, in a game with 3 players, we have
\begin{equation*}
\mathbf{T} =\left[\begin{matrix} 
1 & -1/2 & -1/2 & -1/2 & 1/6 & 1/6 & 1/6 & 0 \\ 
1 & 1/2 & -1/2 & -1/2 & -1/3 & -1/3 & 1/6 & 1/6 \\ 
1 & -1/2 & 1/2 & -1/2 & -1/3 & 1/6 & -1/3 & 1/6 \\ 
1 & -1/2 & -1/2 & 1/2 & 1/6 & -1/3 & -1/3 & 1/6 \\ 
1 & 1/2 & 1/2 & -1/2 & 1/6 & -1/3 & -1/3 & -1/6 \\ 
1 & 1/2 & -1/2 & 1/2 & -1/3 & 1/6 & -1/3 & -1/6 \\ 
1 & -1/2 & 1/2 & 1/2 & -1/3 & -1/3 & 1/6 & -1/6 \\ 
1 & 1/2 & 1/2 & 1/2 & 1/6 & 1/6 & 1/6 & 0 \\ 
\end{matrix}\right].
\end{equation*}

Another concept in game theory directly associated with the interaction indices is the concept of $k$-additive games. We say that a game is $k$-additive if $I(A) = 0$ for all $A$ such that $\left| A \right| > k$. As it will be further detailed in the next section, an advantage of such games is that one reduces the number of parameters to be defined (e.g., from $2^m$ to $m(m+1)/2$ when $k=2$). In the example with 3 players, for instance, if one assumes a $2$-additive game, the last column of $\mathbf{T}$ can be removed since $I({1,2,3}) = 0$.

\subsection{The Choquet integral}
\label{subsec:choquet}


The (discrete) Choquet integral~\citep{Choquet1954} is a non-additive (more precisely, a piecewise linear) aggregation function that models interactions among attributes. It is defined on a set of parameters associated with all possible coalitions of attributes. It has been largely used in multicriteria decision making problems~\citep{Grabisch1996,Grabisch2010} and, in such situations, the parameters associated with the Choquet integral are called capacity coefficients. A capacity is a set function $\mu:2^{M} \rightarrow \mathbb{R}_{+}$ satisfying the axioms of normalization ($\mu(\emptyset) = 0$ and $\mu(M) = 1$) and monotonicity (if $A \subseteq D \subseteq M$, $\mu(A) \leq \mu(D) \leq \mu(M)$). However, the Choquet integral is not restricted to capacities~\citep{Grabisch2016}. Indeed, it can be defined by means of a game $\upsilon:2^{M} \rightarrow \mathbb{R}$ satisfying $\upsilon(\emptyset) = 0$. The Choquet integral definition based on a game $\upsilon$ is given as follows:
\begin{equation}
\label{eq:model_ci}
f_{CI}(\mathbf{x}) = \sum_{j=1}^m (x_{(j)} - x_{(j-1)})\upsilon(\{(j), \ldots, (m)\}),
\end{equation}
where $\cdot_{(j)}$ indicates a permutation of the indices $j$ such that $0 \leq x_{(1)} \leq x_{(j)} \leq \ldots \leq x_{(m)} \leq 1$ (with $x_{(0)}=0$).

Since the Choquet integral is defined by means of a game, one may define it in terms of Shapley values and interaction indices. Therefore, one has a clear interpretation about the marginal contribution of each feature in the aggregation procedure as well as the interaction degree between them. For instance, if two attributes have a positive (resp. negative) interaction, the payoff of such a coalition is (resp. is not) greater than the sum of its individual payoffs. Moreover, one may also consider the case of a $k$-additive game and, therefore, a $k$-additive Choquet integral~\citep{Grabisch1997b}. For example, if one assumes a $2$-additive game,~\eqref{eq:model_ci} can be formulated as follows:
\begin{equation}
\label{eq:choquet_2add}
f_{CI}(\mathbf{x}) = \sum_j x_j \left( \phi_j - \frac{1}{2} \sum_{j'} \left| I_{j,j'} \right| \right) + \sum_{I_{j,j'} < 0} (x_j \vee x_{j'}) \left| I_{j,j'} \right| + \sum_{I_{j,j'} > 0} (x_j \wedge x_{j'}) I_{j,j'},
\end{equation}
where $\vee$ and $\wedge$ represent the maximum and the minimum operators, respectively. Note that, when learning the Choquet integral parameters, if one assumes a $2$-additive model, one reduces the number of parameters from $2^m$ to $m(m+1)/2$. Therefore, $2$-additive and, more generally, $k$-additive models emerge as a strategy that reduces the computational complexity in optimization tasks and provides a more interpretable model (since one has less parameters to interpret). Moreover, it is also known from multicriteria decision making applications~\citep{Grabisch2002,Grabisch2006,Pelegrina2020} that, even if one adopts a $2$-additive model, the Choquet integral is still being flexible enough to model inter-attributes relations and can achieve a high level of generalization.

It is important to remark that, if one assumes that the game is $1$-additive, the Choquet integral becomes a weighted arithmetic mean.

\subsection{Model-agnostic methods for local interpretability}
\label{subsec:lime_shap}

We describe in this section two famous model-agnostic methods for local interpretability: LIME and SHAP. At first, we briefly present the idea behind tabular LIME (i.e., LIME for tabular data). Then, we further discuss the SHAP method, specially the Kernel SHAP strategy. It is worth mentioning that, differently from~\citep{Ribeiro2016,Lundberg2017}, we here adopt a notation based on set theory in order to clearly define the elements used in the considered approaches.

\subsubsection{LIME}
\label{subsubsec:lime}

The main idea of LIME~\citep{Ribeiro2016} for local explanations is to locally approximate a (generally) complex function $f(\cdot)$ (frequently obtained by a black box model) by an interpretable model $g(\cdot)$. For this purpose, in order to explain the outcome $f(\mathbf{x}^*)$ of an instance $\mathbf{x}^*$, one firstly generates a set of $q$ perturbed samples $\mathbf{z}_l$, $l=1, \ldots, q$, in the neighborhood of $\mathbf{x}^*$. For each sample $\mathbf{z}_l$, one also defines a binary vector $\mathbf{z}_l'$ such that $z_{l,j}' = 1$ if $z_{l,j}$ is close enough\footnote{In order to define how close $z_{l,j}$ is from $x_{j}^*$, for each attribute, LIME equally splits the training data (by taking the quantiles of the training data) into predefined bins. Therefore, if $z_{l,j}$ is on the same bin as $x_{j}^*$, $z_{l,j}' = 1$, or $z_{l,j}' = 0$ otherwise. For further details about this procedure, the interested reader may refer to~\citep{Garreau2020}} to $x_{j}^*$, or $z_{l,j}' = 0$ otherwise. Once all samples have been generated, LIME deals with the following optimization problem:
\begin{equation}
    \label{eq:lime_form}
    \min_{g \in G} \mathcal{L}(f,g,\pi_{\mathbf{x}^*}) + \Omega(g),
\end{equation}
where $\mathcal{L}(f,g,\pi_{\mathbf{x}^*})$ is the loss function, $\pi_{\mathbf{x}^*}$ is a proximity measure between the instance to be explained and the perturbed samples and $\Omega(g)$ is a measure of complexity of the interpretable model $g(\cdot)$. In tabular LIME, the authors used the exponential kernel for the proximity measure, which leads to the expression
\begin{equation}
    \pi_{\mathbf{x}^*}(\mathbf{z}_l') = \exp \left(\frac{-\| \mathbbm{1} - \mathbf{z}_l' \|^2}{\alpha^2} \right),
\end{equation}
where $\| \cdot \|$ is the Euclidean norm, $\mathbbm{1}$ is a vector of 1's and $\alpha$ is a positive bandwidth parameter (as default, the authors assumed $\alpha = \sqrt{0.75m}$). By assuming a weighted least squared function for $\mathcal{L}(f,g,\pi_{\mathbf{x}^*})$, a linear function $g(\mathbf{z}') = \beta_0 + \beta^T \mathbf{z}'$ (where $\beta =\left( \beta_1, \ldots, \beta_m \right)$) and letting $\Omega(\beta) = \lambda \| \beta \|^2$ represent a regularization term with $\lambda > 0$, LIME can be formulated as follows:
\begin{equation}
    \label{eq:lime_opt}
    \min_{\beta_0, \beta_1, \ldots, \beta_m} \sum_{l=1}^q \pi_{\mathbf{x}^*}(\mathbf{z}_l') \left( f(\mathbf{z}_l) - \left(\beta_0 + \beta^T \mathbf{z}_l' \right) \right)^2 + \lambda \| \beta \|^2.
\end{equation}
After solving~\eqref{eq:lime_opt}, one can visualize the obtained parameters $\beta$ and, therefore, interpret the (positive or negative) contribution of each attribute in the predicted outcome in the vicinity of $\mathbf{x}^*$.

\subsubsection{SHAP}
\label{subsubsec:shap}

Differently from LIME, the purpose of SHAP is to use the Shapley values in order to locally explain a prediction. The idea is to associate to each attribute its marginal contribution in the predicted outcome. In this section, we present a summary of the idea behind SHAP. Moreover, we discuss the Kernel SHAP, which is a kernel-based approach for approximating the SHAP values by using the LIME formulation. For further details, the interested reader may refer to~\citep{Lundberg2017,Lundberg2018,Lundberg2020,Aas2021}.

The idea that brings Shapley values into interpretability methods in machine learning associates players and payoffs in game theory to attributes and values of a subset of attributes in the model prediction, respectively. Before presenting the idea behind SHAP, let us define the characteristic vector of $A$. Recall that $M = \left\{1, \ldots, m \right\}$ represents the set of $m$ attributes. For any $A \subseteq M$, $\mathbf{1}_A \in \{0,1\}^m$ denotes the characteristic vector of $A$, i.e., a binary vector such that the $j$-th coordinate is 1, if $j \in A$, and 0, otherwise. For example, for $M = \left\{1, 2, 3 \right\}$, $\mathbf{1}_{\left\{2,3\right\}} = \left[0,1,1 \right]$ means a coalition of attributes $\left\{2, 3 \right\}$.

Based on the aforementioned definition, in order to explain the predicted outcome $f(\mathbf{x}^*)$ of an instance $\mathbf{x}^*$, the authors decompose $f(\mathbf{x}^*)$ by assuming the additive feature attribution function given by
\begin{equation}
    \label{eq:shap_g}
    f(\mathbf{x}^*) = g(\mathbf{1}_M) = \phi_0 + \sum_{j \in M} \phi_j.
\end{equation}
Moreover, they argue that the only possible explanation model $g(\cdot)$ that follows Equation~\eqref{eq:shap_g} and satisfies the local accuracy, missingness and consistency properties (see Appendix A for the definitions) consists in defining $\phi_0 = \mathbb{E}\left[f(\mathbf{x})\right]$, i.e., the (overall) expected prediction when one does not know any attribute value from $\mathbf{x}^*$, and the (exact) SHAP values $\phi_{j}$, $j=1, \ldots, m$, given by
\begin{equation}
\label{eq:shap_values}
\phi_{j}(f,\mathbf{x}^*) = \sum_{A \subseteq M \backslash \left\{j\right\}} \frac{\left(m-\left| A \right|-1\right)! \left| A \right|!}{m!} \left[ \hat{f}_{\mathbf{x}^*}( A \cup \left\{j\right\}) - \hat{f}_{\mathbf{x}^*}( A) \right],
\end{equation}
where $\hat{f}_{\mathbf{x}^*}( A)$ is the expected model prediction given the knowledge on the attributes values of $\mathbf{x}^*$ that are present in coalition $A$, that is:
\begin{equation}
\label{eq:exp_pred}
\hat{f}_{\mathbf{x}^*}( A) = \mathbb{E}\left[f\left( \mathbf{x} \right) | x_j = x_j^* \, \, \forall \, \, j \in A \right].
\end{equation}
Note in Equation~\eqref{eq:exp_pred} that one has missing values for all attributes $j' \in \overline{A}$, where $\overline{A}$ is the complement set of $A$ (if $A = M$, then $\hat{f}_{\mathbf{x}^*}( M) = \mathbb{E}\left[f\left( \mathbf{x}^* \right) \right] = f\left( \mathbf{x}^* \right)$ and there are no missing values). In this case, in order to calculate the expected prediction, one randomly samples these missing values from the training data. In this paper, as well as in the Kernel SHAP method, we assume independence among attributes. Therefore, the expected prediction can be calculated as follows:
\begin{equation}
\label{eq:exp_pred_indep}
\hat{f}_{\mathbf{x}^*}( A) = \frac{1}{q} \sum_{l=1}^q f\left(\mathbf{x}_{A}^*,\mathbf{x}_{l,\overline{A}} \right),
\end{equation}
where $\mathbf{x}_{l,\overline{A}}$, $l=1, \ldots, q$, are samples from the training data. Note that, in comparison with the game theory formulation presented in Equation~\eqref{eq:power_ind_s}, $\hat{f}_{\mathbf{x}^*}( A)$ represents the payoff $\upsilon(A)$. Moreover, when all attributes are missing, i.e., $A = \emptyset$, one has $\hat{f}_{\mathbf{x}^*}( \emptyset ) = \mathbb{E}\left[f\left( \mathbf{x} \right)\right] = \phi_0$. 

Among the properties satisfied by the SHAP values, the local accuracy plays an important role in local interpretability and differentiate SHAP from the original LIME formulation (as presented in Section~\ref{subsubsec:lime}). It states that one can decompose the predicted outcome $f(\mathbf{x}^*)$ by the sum of the SHAP values and the overall expected prediction $\phi_0$, i.e., $f(\mathbf{x}^*) = \phi_0 + \sum_{j=1}^m \phi_j$. Therefore, one may interpret the SHAP values as the contribution of each attribute when one moves from the overall expected prediction when all attributes are missing to the actual outcome $f(\mathbf{x}^*)$.

\subsubsection{Kernel SHAP}
\label{subsubsec:kernel_shap}

An important remark in the exact SHAP values calculation is that one needs to sample all $2^m$ possible coalitions of attributes and calculate its expected model prediction. Therefore, this procedure may be computationally heavy for a large number of attributes. In order to overcome this inconvenience, the authors proposed a SHAP value-based formulation called Kernel SHAP~\citep{Lundberg2017}. Kernel SHAP emerges as the formulation of LIME method that leads to the SHAP values. For instance, the authors claimed that if one assumes
\begin{itemize}
    \item $\Omega(g) = 0$,
    \item $\pi(A) = \frac{(m-1)}{\binom{m}{\left|A\right|} \left|A\right| (m - \left|A\right|)}$,
    \item $\mathcal{L}(f,g,\pi) = \sum_{A \in \mathcal{M}} \pi(A) \left( \hat{f}_{\mathbf{x}^*}( A) - g(\mathbf{1}_A) \right)^2$,
    where $g(\mathbf{1}_A) = \phi_0 + \sum_{j \in A} \phi_j$ and $\mathcal{M} \subseteq \mathcal{P}(M)$ (recall that $\mathcal{P}(M)$ is the power set of $M$),
\end{itemize}
the solution of the weighted least square problem
\begin{equation}
    \label{eq:shap_opt}
    \min_{\phi_0, \phi_1, \ldots, \phi_m} \sum_{A \in \mathcal{M}} \frac{(m-1)}{\binom{m}{\left|A\right|} \left|A\right| (m - \left|A\right|)} \left( \hat{f}_{\mathbf{x}^*}( A) - \left( \phi_0 + \sum_{j \in A} \phi_j \right)\right)^2
\end{equation}
leads to the SHAP values. Note that, differently from the LIME formulation, $\pi(A)$ in Kernel SHAP only depends on coalition $A$. Moreover, $\pi(A)$ tends to infinity when $A = M$. Therefore, in the optimal solution, $\hat{f}_{\mathbf{x}^*}( M) = f(\mathbf{x}^*) = g(\mathbf{1}_M) = \phi_0 + \sum_{j=1}^m \phi_j$. This ensures that $f(\mathbf{x}^*)$ is explained by the sum of the SHAP values and the overall expected prediction $\mathbb{E}\left[f(\mathbf{x})\right]$. Similarly, when $A = \emptyset$, the associated $\pi(\emptyset)$ also tends to infinity. This ensures that $\hat{f}_{\mathbf{x}^*}( \emptyset ) = \mathbb{E}\left[f(\mathbf{x})\right] = g(\mathbf{1}_{\emptyset}) = \phi_0$. In practice, we replace these infinite values by a big constant (e.g., $10^6$).

As~\eqref{eq:shap_opt} is a weighted least square problem, one may easily represent it (as well as its solution) by means of matrices and vectors (we borrow such a formulation from~\citep{Aas2021}). Suppose that $n_{\mathcal{M}}$ represents the number of elements in $\mathcal{M}$ (i.e., the number of coalitions considered in the optimization problem~\eqref{eq:shap_opt}). Let us also define $\phi = \left[ \phi_0, \phi_1, \ldots, \phi_m \right]$ and $\mathbf{Z} \in \left\{0,1\right\}^{n_{\mathcal{M}} \times (m+1)}$ as the matrix such that the first column is 1 for every row and the remaining $m+1$ columns are composed, in each row, by all $\mathbf{1}_A$, $A \in \mathcal{M}$. Moreover, assume that $\mathbf{f} \in \mathbb{R}^{n_{\mathcal{M}} \times 1}$ and $\mathbf{W} \in \mathbb{R}^{n_{\mathcal{M}} \times n_{\mathcal{M}}}$ are the vector of evaluations $\hat{f}_{\mathbf{x}^*}( A)$ and the diagonal matrix whose elements are given by $\pi(A)$, respectively, associated with all $A \in \mathcal{M}$. For example, in a problem with 3 attributes ($M=\left\{1,2,3 \right\}$) and using $\emptyset$, $\left\{1 \right\}$, $\left\{2 \right\}$, $\left\{1,3 \right\}$ and $M$ as the coalitions of attributes, we have the following:
\begin{equation*}
\phi = \left[\begin{matrix} 
\phi_0 \\
\phi_1 \\
\phi_2 \\
\phi_3
\end{matrix}\right], \, \, \mathbf{Z} = \left[\begin{matrix} 
1 & 0 & 0 & 0 \\
1 & 1 & 0 & 0 \\
1 & 0 & 1 & 0 \\
1 & 1 & 0 & 1 \\
1 & 1 & 1 & 1 \\
\end{matrix}\right], \, \, \mathbf{f} = \left[\begin{matrix} 
\hat{f}_{\mathbf{x}^*}( \emptyset) \\
\hat{f}_{\mathbf{x}^*}( \left\{1 \right\} ) \\
\hat{f}_{\mathbf{x}^*}( \left\{2 \right\} ) \\
\hat{f}_{\mathbf{x}^*}( \left\{1,3 \right\} ) \\
\hat{f}_{\mathbf{x}^*}( M) \\
\end{matrix}\right]\text{ and }\mathbf{W} = \left[\begin{matrix} 
10^6 & 0 & 0 & 0 & 0 \\
0 & \pi(\left\{1 \right\}) & 0 & 0 & 0 \\
0 & 0 & \pi(\left\{2 \right\}) & 0 & 0 \\
0 & 0 & 0 & \pi(\left\{1,3 \right\}) & 0 \\
0 & 0 & 0 & 0 & 10^6 \\
\end{matrix}\right].
\end{equation*}
Based on the vector/matrix notation, one may represent the optimization problem~\eqref{eq:shap_opt} as
\begin{equation}
    \label{eq:shap_opt_matrix}
    \min_{\phi} \left(\mathbf{f} - \mathbf{Z}\phi \right)^T \mathbf{W} \left(\mathbf{f} - \mathbf{Z}\phi \right),
\end{equation}
whose solution is given by
\begin{equation}
    \label{eq:shap_opt_matrix_sol}
    \phi = \left(\mathbf{Z}^T\mathbf{W}\mathbf{Z} \right)^{-1} \mathbf{Z}^T\mathbf{W}\mathbf{f}.
\end{equation}

Remark that $\mathbf{S} = \left(\mathbf{Z}^T\mathbf{W}\mathbf{Z} \right)^{-1} \mathbf{Z}^T\mathbf{W}$ can be calculated independently of the instance of interest $\mathbf{x}^*$. Therefore, an interesting aspect in Kernel SHAP is that, even if one would like to explain the outcome of several instances of interest, one only needs to calculate $\mathbf{S}$ once. The only element that varies in Equation~\eqref{eq:shap_opt_matrix_sol} is the vector of evaluations $\mathbf{f}$, which is dependent on the instance of interest under analysis.

Another remark in Kernel SHAP is that, if $\mathcal{M} = \mathcal{P}(M)$, Equation~\eqref{eq:shap_opt_matrix_sol} leads to the exact SHAP values (as in Equation~\eqref{eq:shap_values}). Therefore, in this exact calculation, one needs the expected predictions $\hat{f}_{\mathbf{x}^*}( A )$ for all possible coalitions $A$, which can be infeasible for a large number of attributes. However, the clever strategy used in Kernel SHAP aims at selecting the most promising expected predictions to approximate the SHAP values. For instance, if one considers the weighting kernel $\pi(A)$, one may note that the majority of $A$ has a low contribution in the SHAP value calculation. Therefore, the aim in Kernel SHAP consists in defining a subset $\mathcal{M}$ from $\mathcal{P}(M)$ such that the elements $A \in \mathcal{M}$ are sampled\footnote{In order to avoid double selecting the same $A$, in the experiments conducted in this paper, we adopted a sampling procedure without replacement. Therefore, after sampling a coalition, we update the probability distribution by removing the associated kernel weight and normalizing the probabilities.} from a probability distribution following the weighting kernel $\pi(A)$. Greater is the weight associated with $A$, greater is the chance that $A$ is sampled from $\mathcal{P}(M)$.

\section{A more general model for local interpretability based on Shapley values}
\label{sec:prop}

As highlighted in Section~\ref{sec:intro}, we have two main contributions in this paper: to provide a straightforward formulation of the Kernel SHAP method based on the Choquet integral, and to adopt the concept of $k$-additive games in order to reduce the number of evaluations needed to approximate the SHAP values. Both contributions are presented in the sequel.

\subsection{The Choquet integral as an interpretable model for Kernel SHAP formulation}

We here show that we need not consider an additive function as the interpretable model in order to explain a prediction based on the Shapley values. Indeed, if we adopt the non-additive function called Choquet integral, we also achieve such values. Recall the Choquet integral function defined in Equation~\eqref{eq:model_ci}. The idea is to define the local interpretable model $g(\cdot)$ as
\begin{equation}
    \label{eq:choquet_lime}
    g(\mathbf{1}_A) = \phi_0 + f_{CI}(\mathbf{1}_A),
\end{equation}
where $\phi_0$ is the intercept parameter. In order to simplify the notation, let us also define $\bar{f}_{\mathbf{x}^*}( A) = \hat{f}_{\mathbf{x}^*}( A) - \phi_0$. In this case and based on the LIME formulation for local interpretability, one obtains the following loss function:
\begin{equation}
    \label{eq:choquet_lime_loss}
    \mathcal{L}(f,g,\pi) = \sum_{A \in \mathcal{M}}\pi'(A) \left( \bar{f}_{\mathbf{x}^*}( A) - f_{CI}(\mathbf{1}_A) \right)^2,
\end{equation}
where the weights $\pi'(A)$ have the same values for all $A$ (e.g., 1) except for the empty set and the grand coalition $M$, whose associated weights are big numbers (e.g., $10^6$). We clarify these choices soon.

An interesting aspect on the Choquet integral and that can be easily checked from Equation~\eqref{eq:model_ci} is that, when we only have binary data (which is our case since $\mathbf{1}_A$ is a binary vector), $f_{CI}(\mathbf{1}_A) = \upsilon(A)$. Therefore, we may redefine the loss function presented in~\eqref{eq:choquet_lime_loss} as
\begin{equation}
    \label{eq:choquet_lime_loss_game}
    \mathcal{L}(f,g,\pi) = \sum_{A \in \mathcal{M}} \pi'(A) \left( \bar{f}_{\mathbf{x}^*}( A) - \upsilon(A) \right)^2.
\end{equation}
Remark that, for $A = \emptyset$, we minimize $\bar{f}_{\mathbf{x}^*}( \emptyset) - \upsilon(\emptyset) = \hat{f}_{\mathbf{x}^*}( \emptyset) - \phi_0 - \upsilon(\emptyset) = \upsilon(\emptyset)$, since $\hat{f}_{\mathbf{x}^*}( \emptyset) = \phi_0$ by definition. In order to ensure that $\upsilon(\emptyset) = 0$ (according to the definition of a game), we assume a big number for $\pi'(\emptyset)$ when solving the optimization problem. Similarly, when $A=M$, we minimize the difference between $\hat{f}_{\mathbf{x}^*}( M)$ and $\phi_0 + \upsilon(M)$. In this case, since $\upsilon(M) = \upsilon(\emptyset) + \sum_{j=1}^m \phi_j$, the big weight $\pi'(M)$ ensures that $\phi_0 + \sum_{j=1}^m \phi_j = \hat{f}_{\mathbf{x}^*}( M) = f(\mathbf{x}^*)$ (the local accuracy property).

Furthermore, if one considers the linear transformation presented in Equation~\eqref{eq:iitomu_s}, the loss function can be directly defined in terms of the generalized Shapley interaction indices. In this case, we have the following optimization problem:
\begin{equation}
    \label{eq:shap_opt_ci}
    \min_{\mathbf{I}} \sum_{A \in \mathcal{M}} \pi'(A) \left( \bar{f}_{\mathbf{x}^*}( A) - \sum_{D \subseteq M} \gamma^{\left|D \right|}_{\left| A \cap D \right|}I(D) \right)^2,
\end{equation}
where $\gamma$ is defined as in Equation~\eqref{eq:gamma}. As in the Kernel SHAP, our proposal also leads to the exact SHAP values if $\mathcal{M}=\mathcal{P}(M)$. We prove it in the sequel.

\begin{theor}{}
\label{theo1}
If $\mathcal{M} = \mathcal{P}(M)$, the solution of~\eqref{eq:shap_opt_ci} leads to the exact SHAP values as calculated in Equation~\eqref{eq:shap_values}.
\end{theor}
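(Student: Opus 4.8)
The plan is to exploit the fact that, under $\mathcal{M} = \mathcal{P}(M)$, the weighted least squares problem \eqref{eq:shap_opt_ci} has exactly as many data points (the coalitions $A$) as free parameters (the interaction indices $I(D)$, $D \subseteq M$), namely $2^m$ of each, and that the linear map linking them is invertible. First I would recall from Equation~\eqref{eq:iitomu_s} that the inner sum $\sum_{D \subseteq M} \gamma^{|D|}_{|A \cap D|} I(D)$ is nothing but $\upsilon(A)$, so that the objective in \eqref{eq:shap_opt_ci} coincides with $\sum_{A \in \mathcal{P}(M)} \pi'(A)\left(\bar{f}_{\mathbf{x}^*}(A) - \upsilon(A)\right)^2$, i.e. with the form already derived in \eqref{eq:choquet_lime_loss_game}. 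Writing $\upsilon = \mathbf{T}\mathbf{I}$ in the cardinal-lexicographic ordering, the vector of game values is obtained from the vector of interaction indices through the invertible matrix $\mathbf{T} \in \mathbb{R}^{2^m \times 2^m}$ introduced earlier.

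The key step is to observe that, because $\mathbf{T}$ is invertible, the linear system $\mathbf{T}\mathbf{I} = \bar{\mathbf{f}}$, where $\bar{\mathbf{f}}$ collects the values $\bar{f}_{\mathbf{x}^*}(A)$, admits a unique solution $\mathbf{I}^\star = \mathbf{T}^{-1}\bar{\mathbf{f}}$. This solution makes every residual vanish, so the objective attains its global minimum value $0$; since all weights $\pi'(A)$ are strictly positive, any $\mathbf{I}$ leaving a nonzero residual yields a strictly positive objective, whence $\mathbf{I}^\star$ is the unique minimizer. Consequently the optimal game satisfies $\upsilon(A) = \bar{f}_{\mathbf{x}^*}(A) = \hat{f}_{\mathbf{x}^*}(A) - \phi_0$ for every $A \subseteq M$; in particular $\upsilon(\emptyset) = \hat{f}_{\mathbf{x}^*}(\emptyset) - \phi_0 = 0$, so the fitted $\upsilon$ is a genuine game and the big weights $\pi'(\emptyset), \pi'(M)$ play no role in the exact case.

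It then remains to read off the Shapley values. By construction the entries of $\mathbf{I}^\star$ indexed by singletons are the interaction indices $I(\{j\})$, which, evaluating \eqref{eq:int_ind_s} at $A = \{j\}$, reduce to the Shapley values $\phi_j$ of the game $\upsilon$ as given by \eqref{eq:power_ind_s}. Substituting $\upsilon(A) = \hat{f}_{\mathbf{x}^*}(A) - \phi_0$ there, the constant $\phi_0$ cancels in each marginal difference, $\upsilon(A \cup \{j\}) - \upsilon(A) = \hat{f}_{\mathbf{x}^*}(A \cup \{j\}) - \hat{f}_{\mathbf{x}^*}(A)$, so the expression collapses precisely to the exact SHAP value of Equation~\eqref{eq:shap_values}.

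I expect the main obstacle to be the argument that the global optimum is attained with zero residual rather than merely a least squares projection: one must ensure that the $2^m$ columns generated by the $I(D)$ through $\mathbf{T}$ are linearly independent, i.e. that $\mathbf{T}$ has full rank, so that the parametrization does not collapse and exact interpolation is feasible. Once invertibility of $\mathbf{T}$ is invoked, which follows from \eqref{eq:iitomu_s} being the inverse of the transform \eqref{eq:int_ind_s}, the remainder is the routine cancellation of $\phi_0$ described above.
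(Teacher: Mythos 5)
Your proposal is correct and follows essentially the same route as the paper's proof: establish that at the optimum the fitted game interpolates exactly, $\upsilon(A) = \hat{f}_{\mathbf{x}^*}(A) - \phi_0$ for all $A$, and then observe that $\phi_0$ cancels in the marginal differences of Equation~\eqref{eq:power_ind_s} so that $\phi_j$ coincides with Equation~\eqref{eq:shap_values}. The only difference is that you explicitly justify the exact-interpolation step via the invertibility of $\mathbf{T}$ and the strict positivity of the weights $\pi'(A)$, whereas the paper simply asserts that the problem has a unique solution with zero residuals; your version is the more complete one.
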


\begin{proof}
Assume $\mathcal{M} = \mathcal{P}(M)$. In this scenario, the optimization problem~\eqref{eq:shap_opt_ci} has a unique solution such that $\sum_{D \subseteq M} \gamma^{\left|D \right|}_{\left| A \cap D \right|}I(D) = \upsilon(A) = \bar{f}_{\mathbf{x}^*}( A)$. From the obtained game and the linear transformation presented in Equation~\eqref{eq:power_ind_s}, we have that $\phi_{j} = \sum_{A \subseteq M\backslash \left\{j\right\}} \frac{\left(m-\left|A\right|-1\right)!\left|A\right|!}{m!} \left[\upsilon(A \cup \left\{j\right\}) - \upsilon(A) \right]$. It remains to show that $\phi_{j} \equiv \phi_{j}(f,\mathbf{x}^*)$.

Recall that we defined $\bar{f}_{\mathbf{x}^*}( A) = \hat{f}_{\mathbf{x}^*}( A) - \phi_0$ and, then, $\upsilon(A) = \hat{f}_{\mathbf{x}^*}( A) - \phi_0$ in the optimal solution. Therefore, we have the following:
\begin{equation}
\label{eq:proof_equiv}
    \begin{split}
        \phi_{j} & = \sum_{A \subseteq M\backslash \left\{j\right\}} \frac{\left(m-\left|A\right|-1\right)!\left|A\right|!}{m!} \left[\hat{f}_{\mathbf{x}^*}( A \cup \left\{j\right\}) - \phi_0 - \hat{f}_{\mathbf{x}^*}( A) + \phi_0 \right] \\
        & = \sum_{A \subseteq M\backslash \left\{j\right\}} \frac{\left(m-\left|A\right|-1\right)!\left|A\right|!}{m!} \left[\hat{f}_{\mathbf{x}^*}( A \cup \left\{j\right\}) - \hat{f}_{\mathbf{x}^*}( A) \right] \\
        & = \phi_{j}(f,\mathbf{x}^*),
    \end{split}
\end{equation}
which proves that our proposal also converges to the exact SHAP values when $\mathcal{M} = \mathcal{P}(M)$.
\end{proof}

Similarly as in the Kernel SHAP, we may here also rewrite the optimization problem in vector/matrix notation. For this purpose, let us represent $\hat{\mathbf{f}} \in \mathbb{R}^{n_{\mathcal{M}} \times 1}$ as the vector $\mathbf{f}$ (as defined in Section~\ref{subsubsec:kernel_shap}) discounted by $\phi_0$ and $\bar{\mathbf{W}} \in \mathbb{R}^{n_{\mathcal{M}} \times n_{\mathcal{M}}}$ as the diagonal matrix whose elements are 1's except for the elements associated with the empty set and the grand coalition $M$, whose weights are a big number (e.g., $10^6$). Moreover, we define $\upsilon_{\mathcal{M}}$ as the vector of payoffs for all coalitions $A$ such that $A \in \mathcal{M}$. In addition, we consider $\mathbf{T}_{\mathcal{M}}$ as the transformation matrix whose rows are composed by the rows of $\mathbf{T}$ (as defined in Section~\ref{subsec:shapley}) associated with all coalitions $A$ such that $A \in \mathcal{M}$. For example, in the same problem when $M=\left\{1,2,3 \right\}$ and using $\emptyset$, $\left\{1 \right\}$, $\left\{2 \right\}$, $\left\{1,3 \right\}$ and $M$ as the coalitions of attributes, we have the following:
\begin{equation*}
\upsilon_{\mathcal{M}} = \left[\begin{matrix} 
\upsilon(\emptyset) \\
\upsilon(\left\{1 \right\}) \\
\upsilon(\left\{2 \right\}) \\
\upsilon(\left\{1,3 \right\}) \\
\upsilon(M)
\end{matrix}\right] = \left[\begin{matrix} 
1 & -1/2 & -1/2 & -1/2 & 1/6 & 1/6 & 1/6 & 0 \\ 
1 & 1/2 & -1/2 & -1/2 & -1/3 & -1/3 & 1/6 & 1/6 \\ 
1 & -1/2 & 1/2 & -1/2 & -1/3 & 1/6 & -1/3 & 1/6 \\ 
1 & 1/2 & -1/2 & 1/2 & -1/3 & 1/6 & -1/3 & -1/6 \\ 
1 & 1/2 & 1/2 & 1/2 & 1/6 & 1/6 & 1/6 & 0
\end{matrix}\right] \left[\begin{matrix} 
I(\emptyset) \\
\phi_1 \\
\phi_2 \\
\phi_3 \\
I_{1,2} \\
I_{1,3} \\
I_{2,3} \\
I(\left\{1,2,3 \right\})
\end{matrix}\right] = \mathbf{T}_{\mathcal{M}} \mathbf{I}.
\end{equation*}
\begin{equation*}
\hat{\mathbf{f}} = \left[\begin{matrix} 
\hat{f}_{\mathbf{x}^*}( \emptyset ) - \phi_0 \\
\hat{f}_{\mathbf{x}^*}( \left\{1 \right\} ) - \phi_0 \\
\hat{f}_{\mathbf{x}^*}( \left\{2 \right\} ) - \phi_0 \\
\hat{f}_{\mathbf{x}^*}( \left\{1,3 \right\} ) - \phi_0 \\
\hat{f}_{\mathbf{x}^*}( M ) - \phi_0 \\
\end{matrix}\right]\text{ and }\bar{\mathbf{W}} = \left[\begin{matrix} 
10^6 & 0 & 0 & 0 & 0 \\
0 & 1 & 0 & 0 & 0 \\
0 & 0 & 1 & 0 & 0 \\
0 & 0 & 0 & 1 & 0 \\
0 & 0 & 0 & 0 & 10^6 \\
\end{matrix}\right].
\end{equation*}
The vector/matrix notation leads to the following optimization problem:
\begin{equation}
    \label{eq:shap_opt_matrix_ci}
    \min_{\mathbf{I}} \left(\bar{\mathbf{f}} - \mathbf{T}_{\mathcal{M}} \mathbf{I} \right)^T \bar{\mathbf{W}} \left(\bar{\mathbf{f}} - \mathbf{T}_{\mathcal{M}} \mathbf{I} \right),
\end{equation}
whose solution is given by
\begin{equation}
    \label{eq:shap_opt_matrix_sol_ci}
    \mathbf{I} = \left(\mathbf{T}_{\mathcal{M}}^T\bar{\mathbf{W}}\mathbf{T}_{\mathcal{M}} \right)^{-1} \mathbf{T}_{\mathcal{M}}^T\bar{\mathbf{W}}\bar{\mathbf{f}}.
\end{equation}

It is important to note that, differently from the Kernel SHAP formulation discussed in Section~\ref{subsubsec:kernel_shap}, in our proposal we obtain all Shapley interaction indices (which, obviously, include the SHAP values). Therefore, this can also be infeasible for a large number of attributes, since the number of parameters is given by $2^m$. However, one may exploit some degree of additivity about the Choquet integral which contributes to reduce its number of parameters. We discuss this aspect in the next section.

\subsection{$k$-additive games for local interpretability}
\label{subsec:kadd}

As a second contribution, we propose to adopt the concept of $k$-additive games in the Choquet integral-based formulation for local interpretability. Called here $k_{ADD}$-SHAP, our proposal consists in dealing with the following weighted least square problem:
\begin{equation}
    \label{eq:shap_opt_ci_kadd}
    \min_{\phi_0,\mathbf{I}_k} \sum_{A \in \mathcal{M}} \pi'(A) \left( \bar{f}_{\mathbf{x}^*}( A) - \sum_{\substack{D \subseteq M, \\ \left| D \right| \leq k}} \gamma^{\left|D \right|}_{\left| A \cap D \right|}I(D) \right)^2,
\end{equation}
where $\mathbf{I}_k = \left[I(\emptyset), \phi_1, \ldots, \phi_m, I_{1,2}, \ldots, I(\left\{m-k, \ldots, m\right\} \right]$ is the vector of Shapley interaction indices, in a cardinal-lexicographic order, for all $I(D)$ such that $\left| D \right| \leq k$. By using the vector/matrix notation, we may rewrite~\eqref{eq:shap_opt_ci_kadd} as follows:
\begin{equation}
    \label{eq:shap_opt_matrix_ci_kadd}
    \min_{\mathbf{I}_k} \left(\bar{\mathbf{f}} - \mathbf{T}_{\mathcal{M},k} \mathbf{I}_k \right)^T \bar{\mathbf{W}} \left(\bar{\mathbf{f}} -  \mathbf{T}_{\mathcal{M},k} \mathbf{I}_k \right),
\end{equation}
whose solution is given by
\begin{equation}
    \label{eq:shap_opt_matrix_sol_ci_kadd}
    \mathbf{I}_k = \left(\mathbf{T}_{\mathcal{M},k}^T\bar{\mathbf{W}}\mathbf{T}_{\mathcal{M},k} \right)^{-1} \mathbf{T}_{\mathcal{M},k}^T\bar{\mathbf{W}}\bar{\mathbf{f}},
\end{equation}
where $\mathbf{T}_{\mathcal{M},k}$ is equal to $\mathbf{T}_{\mathcal{M}}$ up to the columns associated with all $I(D')$ such that $\left| D' \right| \leq k$ ($I(D') = 0$ for all coalitions $D'$ such that $\left| D' \right| > k$). 

Note that, as in~\eqref{eq:shap_opt_ci_kadd} (or\eqref{eq:shap_opt_matrix_ci_kadd}) we restrict the feasible domain to Shapley indices whose cardinalities are at most $k$, we can not guarantee to achieve the exact SHAP values even if $\mathcal{M} = \mathcal{P}(M)$. In other words, Theorem~\ref{theo1} is not valid for the proposed $k_{ADD}$-SHAP. However, as already mentioned in Section~\ref{subsec:choquet}, an advantage of such a model is that one both drastically reduces the number of parameters to be determined while still having a flexible model to generalize the relation between inputs and outputs. Therefore, in order to approximate the exact SHAP values, we avoid over-parametrization and we may need less evaluations when adopting~\eqref{eq:shap_opt_matrix_ci_kadd} compared to~\eqref{eq:shap_opt_matrix}. It is important to note that, even if in the Kernel SHAP formulation one only searches for the Shapley values, implicitly, one needs the expected predicted evaluations on all coalitions of attributes in order to calculate the exact SHAP values.

With respect to how to select the subset $\mathcal{M}$ of evaluations, we consider the same strategy as in Kernel SHAP. We sample the elements $A \in \mathcal{M}$ according to the probability distribution defined by $p_A = \frac{\pi(A)}{\sum_{A \subseteq M} \pi(A)}$. As we adopt in this paper a sampling procedure without replacement, after sampling a coalition, we update the probability distribution and normalize it. Moreover, as $p_\emptyset$ and $p_M$ are much greater than the other probabilities, it is very likely that both the empty set and the grand coalition $M$ are sampled to compose the subset $\mathcal{M}$.

Equivalently as in the Kernel SHAP formulation, $\mathbf{S}_{\mathcal{M},k} = \left(\mathbf{T}_{\mathcal{M},k}^T\bar{\mathbf{W}}\mathbf{T}_{\mathcal{M},k} \right)^{-1} \mathbf{T}_{\mathcal{M},k}^T\bar{\mathbf{W}}$ (or $\mathbf{S}_{\mathcal{M}} = \left(\mathbf{T}_{\mathcal{M}}^T\bar{\mathbf{W}}\mathbf{T}_{\mathcal{M}} \right)^{-1} \mathbf{T}_{\mathcal{M}}^T\bar{\mathbf{W}}$) can also be calculated independently of the instance of interest $\mathbf{x}^*$. Therefore, in order to explain the outcome of several instances of interest, one only needs to calculate $\mathbf{S}_{\mathcal{M},k}$ (or $\mathbf{S}_{\mathcal{M}}$) once.

\section{Numerical experiments}
\label{sec:exp}

In this section, we present some numerical experiments in order to check the validity and interest of our model\footnote{All codes can be accessed in \url{https://github.com/GuilhermePelegrina/k_addSHAP}.}. The experiments are based on two datasets frequently used in the literature: Diabetes~\citep{Efron2004} and Red Wine Quality~\citep{Cortez2009}. In the sequel, we provide a brief description of both datasets:
\begin{itemize}
\item Diabetes dataset: This dataset contains $m=10$ attributes (age, sex, body mass index, average blood pressure and six blood serum measurements) that describe $n=442$ diabetes patients. All collected data are centralized (with zero mean) and with standard deviation equal to $0.0476$. For each patient, one also has as the predicted value a measure of the diabetes progression. The mean and the standard deviation for the diabetes progression measure are $152.13$ and $77.00$, respectively. In our experiments, we split the dataset into training (80\%, i.e., $n_{tr}=353$ samples) and test (20\%, i.e., $n_{te}=89$ samples).

\item Red Wine Quality dataset: In this dataset, one has $m=11$ attributes describing $n=1599$ red wines. Both mean and standard deviation (std) of attributes are described in Table~\ref{tab:wine}. For each wine, one also has a score (between 0 and 10) indicating its quality. In our experiments, we use this data for the purpose of classification and, therefore, we assume that a good (resp. a bad) wine has a score greater than 5 (resp. at most 5). In total, one has 855 good wine (class value 1) and 744 bad wine (class value 0). Moreover, we split the dataset into training (80\%, i.e., $n_{tr}=1279$ samples) and test (20\%, i.e., $n_{te}=320$ samples).
\end{itemize}

\begin{table}[ht]
  \begin{center}
  \caption{Summary of the Wine dataset.}\label{tab:wine}
  {
  \renewcommand{\arraystretch}{1.0}
	\small
  \begin{tabular}{cccccccc}
	\cline{1-2}\cline{4-5}\cline{7-8}
\textbf{Attributes} & \makecell{ \textbf{Mean} \\ \textbf{($\pm$ std)} } &  & \textbf{Attributes} & \makecell{ \textbf{Mean} \\ \textbf{($\pm$ std)} } &  & \textbf{Attributes} & \makecell{ \textbf{Mean} \\ \textbf{($\pm$ std)} } \\
		\cline{1-2}\cline{4-5}\cline{7-8}
		fixed acidity  & \makecell{ $8.320$ \\ ($\pm 1.740$) } &  & chlorides & \makecell{ $0.087$ \\ ($\pm 0.047$) } &  & pH & \makecell{ $3.311$ \\ ($\pm 0.154$) } \\
		\cline{1-2}\cline{4-5}\cline{7-8}
		volatile acidity  & \makecell{ $0.528$ \\ ($\pm 0.179$) } &  & free sulfur dioxide & \makecell{ $15.875$\\ ($\pm 10.457$) } &  & sulphates & \makecell{ $0.658$ \\ ($\pm 0.169$) } \\
		 \cline{1-2}\cline{4-5}\cline{7-8}
		citric acid & \makecell{ $0.271$ \\ ($\pm 0.195$) } &  & total sulfur dioxide & \makecell{ $46.468$ \\ ($\pm 32.885$) } &  & alcohol & \makecell{ $10.423$ \\ ($\pm 1.065$) } \\
		\cline{1-2}\cline{4-5}\cline{7-8}
		residual sugar & \makecell{ $2.539$ \\ ($\pm 1.409$) } &  & density & \makecell{ $0.997$ \\ ($\pm 0.002$) } &  &  &  \\
		\cline{1-2}\cline{4-5}\cline{7-8}
  \end{tabular}
  }
  \end{center}
\end{table}

Besides different datasets, we also evaluate our proposal by assuming two training models: Neural Network and Random Forest\footnote{We borrowed these methods from the Scikit-learn library~\citep{Pedregosa2011} in Python and adopted the following parameters:
\begin{itemize}
    \item Neural Network: $max_{iter} = 10^6$ for both MLPRegressor and MLPClassifier.
    \item Random Forest: $n\_estimators=1000$, $max\_depth=None$ and $min\_samples\_split=2$ for both RandomForestRegressor and RandomForestClassifier.
\end{itemize}}. Recall that the purpose of this paper is to address interpretability in any trained machine learning models. We do not work on improving the model itself. So we attempt to explain the contributions of attributes regardless how the model is accurate.

\subsection{Experiment varying the number of expected prediction evaluations until the exact SHAP values convergence}
\label{subsec:exp1}

In the first experiment, we verify the convergence of the proposed $k_{ADD}$-SHAP and the Kernel SHAP to the exact SHAP values. For each dataset and test sample (recall that we use the training data to calculate the expected predictions given the coalitions in $\mathcal{M}$), we vary the number of expected prediction evaluations, apply both $k_{ADD}$-SHAP and Kernel SHAP and calculate the squared error when estimating the exact SHAP values. Let us represent, for a given test sample $i'$, the SHAP values obtained by the Equation~\eqref{eq:shap_values} (the exact SHAP values), the $k_{ADD}$-SHAP and the Kernel SHAP as $\phi^{exact,i'}$, $\phi^{k_{ADD},i'}$ and $\phi^{Kernel,i'}$, respectively. The squared error between $\phi^{exact,i'}$ and $\phi^{k_{ADD},i'}$ is given as follows:
\begin{equation}
\label{eq:error}
    \varepsilon_{k_{ADD},i'} = \sum_{j=1}^m \left(\phi_j^{exact,i'} - \phi_j^{k_{ADD},i'} \right)^2.
\end{equation}
In order to calculate the squared error with respect to the Kernel SHAP, one only needs to replace $\phi_j^{k_{ADD},i'}$ by $\phi^{Kernel,i'}$. By increasing $n_{\mathcal{M}}$, i.e., the number of coalitions selected to calculate the expected prediction evaluations used in the estimation procedure, the aim is to verify the convergence to the exact SHAP values. We show the obtained results by taking the median (50th percentile or $q_{0.5}$ - a central tendency measure), the 90th percentile and the 10th percentile ($q_{0.9}$ and $q_{0.5}$, respectively, both used to indicate the dispersion around the median) over $s = 501$ simulations. For each simulation, we calculate the errors when estimating the exact SHAP values of all test samples. For the proposed $k_{ADD}$-SHAP, the percentile $q_a$, $a=0.1,0.5,0.9$, is calculated as follows:
\begin{equation}
\label{eq:average_error}
    \bar{\varepsilon}_{a,k_{ADD}} = q_a \left( \frac{1}{n_{te}}\sum_{i'=1}^{n_{te}} \varepsilon_{k_{ADD},i'}^{1}, \ldots, \frac{1}{n_{te}}\sum_{i'=1}^{n_{te}} \varepsilon_{k_{ADD},i'}^{s} \right)
\end{equation}
where $\varepsilon_{k_{ADD},i'}^{r}$, $r=1, \ldots, s$ represents the squared error for test sample $i'$ in simulation $r$. Equation~\eqref{eq:average_error} can be easily adapted to calculate the metrics when adopting the Kernel SHAP.

The results are presented in Figures~\ref{fig:convergence_diabetes} and~\ref{fig:convergence_wine}. The central line represents the average median and the shaded area indicates the averaged dispersion between the 10th and 90th percentiles. For both datasets and trained models, the $3_{ADD}$-SHAP leads to a faster approximation to the exact SHAP values in comparison with the Kernel SHAP. Moreover, the dispersion was lower for the $3_{ADD}$-SHAP even with reduced numbers of expected prediction evaluations. For Kernel SHAP, one achieves a high dispersion for low number of expected prediction evaluations (see, especially, Figure~\ref{fig:convergence_diabetes}), which decreases as one includes more samples. With respect to the $2_{ADD}$-SHAP, it has a good performance (better than the Kernel SHAP) for few evaluations, however, it diverges as more samples are include in the SHAP values estimation. An explanation for these results is that the $2_{ADD}$-SHAP could rapidly approximate the exact SHAP values when less evaluations are used because it can avoid over-parametrization when only few data are considered. However, when increasing the number of expected prediction evaluations, the $2_{ADD}$-SHAP has not enough flexibility to model the data and the adjusted parameters could not converge to the correct ones. As can be also seen in Figure~\ref{fig:convergence_wine}, the $3_{ADD}$-SHAP also diverges for a high number of evaluations (recall from Section~\ref{subsec:kadd} that we can not guarantee to achieve the exact SHAP values even if $\mathcal{M} = \mathcal{P}(M)$), however, it still achieves a very low error. Clearly, as we increase $k$, the parameters become more flexible to model the data and estimate the exact SHAP values.

\begin{figure}[!h]
\centering
\subfloat[Neural Network ($score \approx 0.45$).]{\includegraphics[width=3.0in]{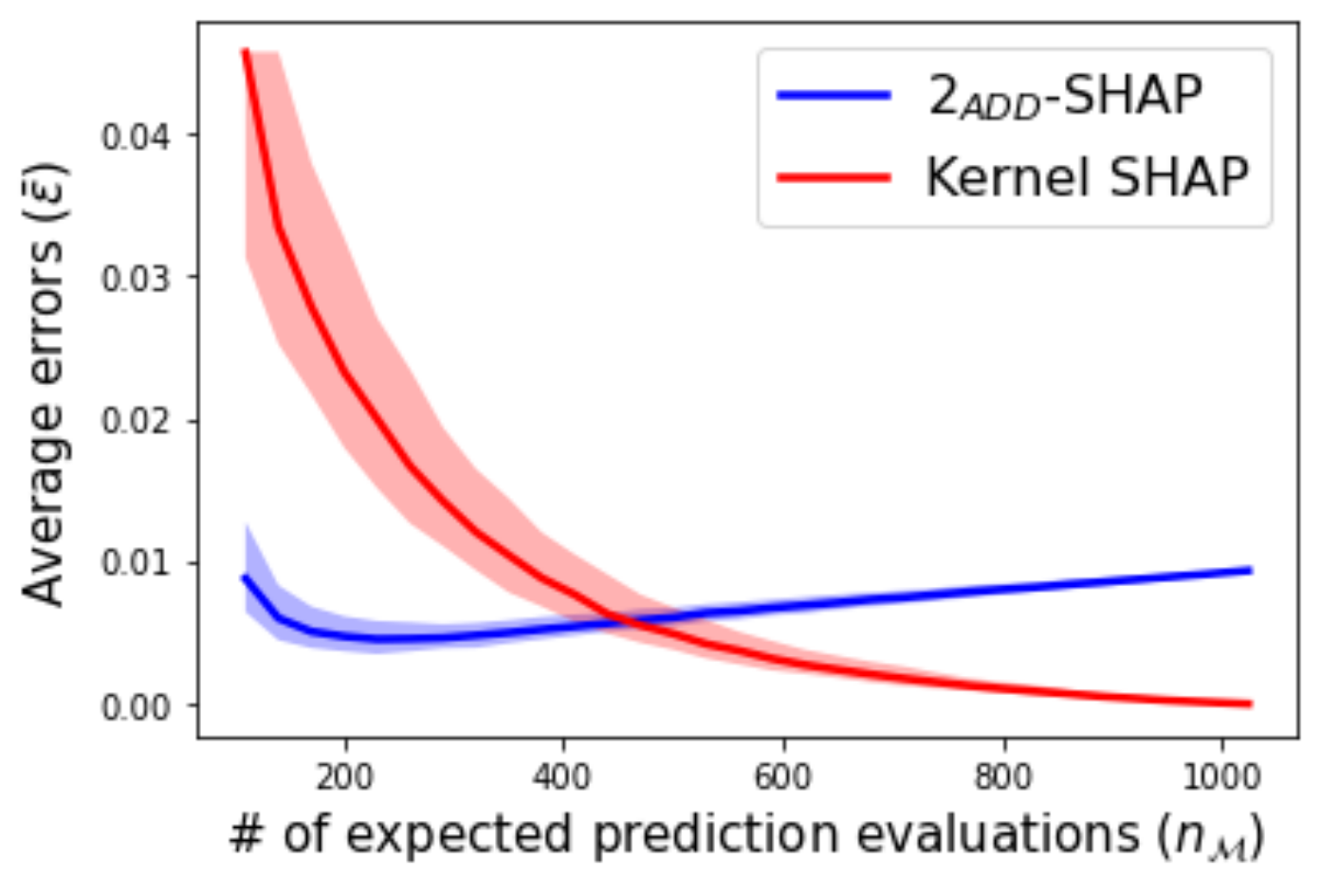}
\label{fig:convergence_diabetes_2add_mlp}}
\hfil
\subfloat[Random Forest ($score \approx 0.44$).]{\includegraphics[width=3.0in]{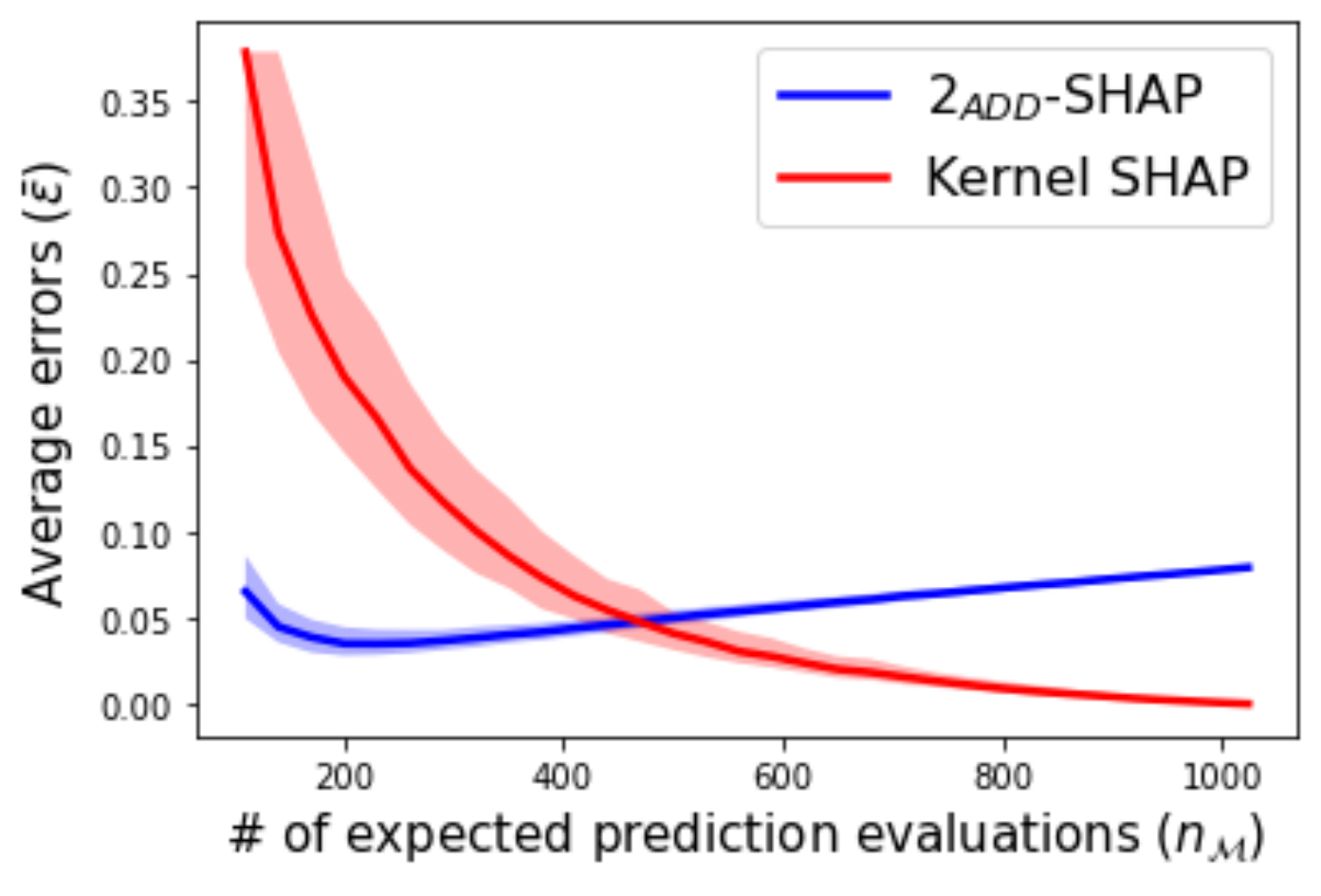}
\label{fig:convergence_diabetes_2add_rf}}
\hfill
\subfloat[Neural Network ($score \approx 0.45$).]{\includegraphics[width=3.0in]{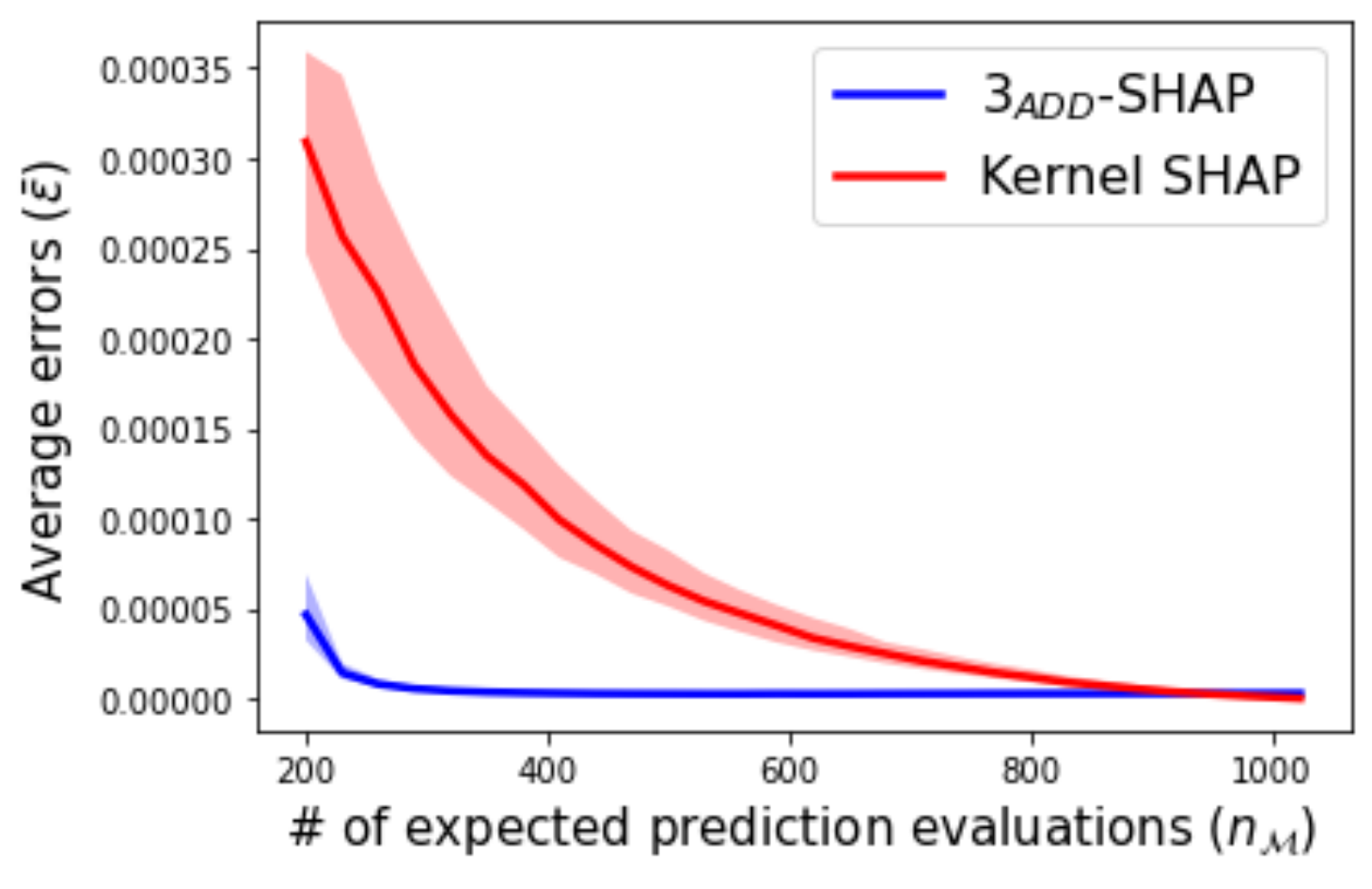}
\label{fig:convergence_diabetes_3add_mlp}}
\hfil
\subfloat[Random Forest ($score \approx 0.44$).]{\includegraphics[width=3.0in]{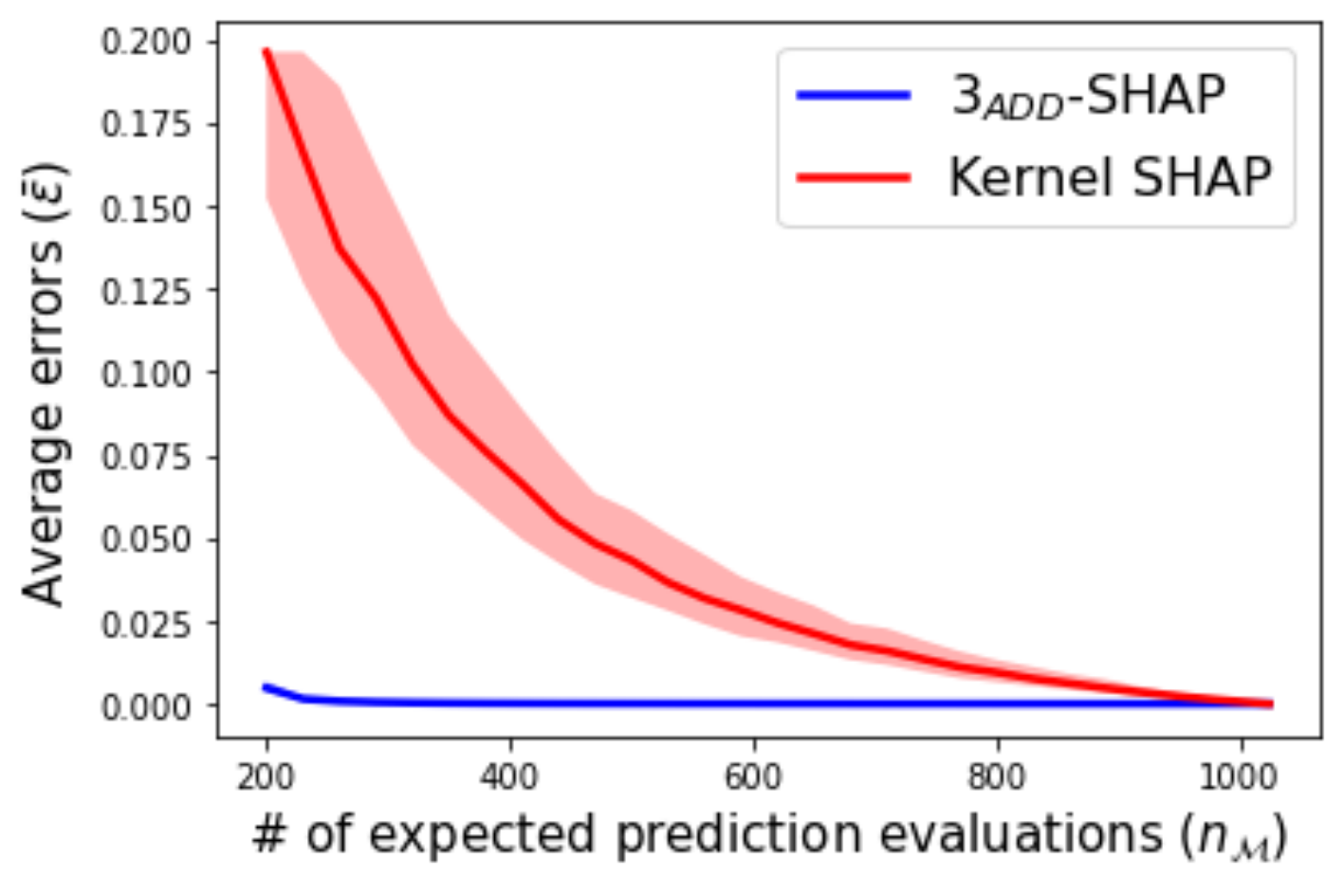}
\label{fig:convergence_diabetes_3add_rf}}
\caption{Comparison between the convergence of $2_{ADD}$-SHAP, $3_{ADD}$-SHAP and Kernel SHAP varying $n_{\mathcal{M}}$, i.e., the number of coalitions selected to calculate the expected prediction evaluations (Diabetes dataset). For both Neural Network and Random Forest, the $score$ indicates the coefficient of determination of the predicted outcomes given the test samples.}
\label{fig:convergence_diabetes}
\end{figure}

\begin{figure}[!h]
\centering
\subfloat[Neural Network ($score \approx 0.73$).]{\includegraphics[width=3.0in]{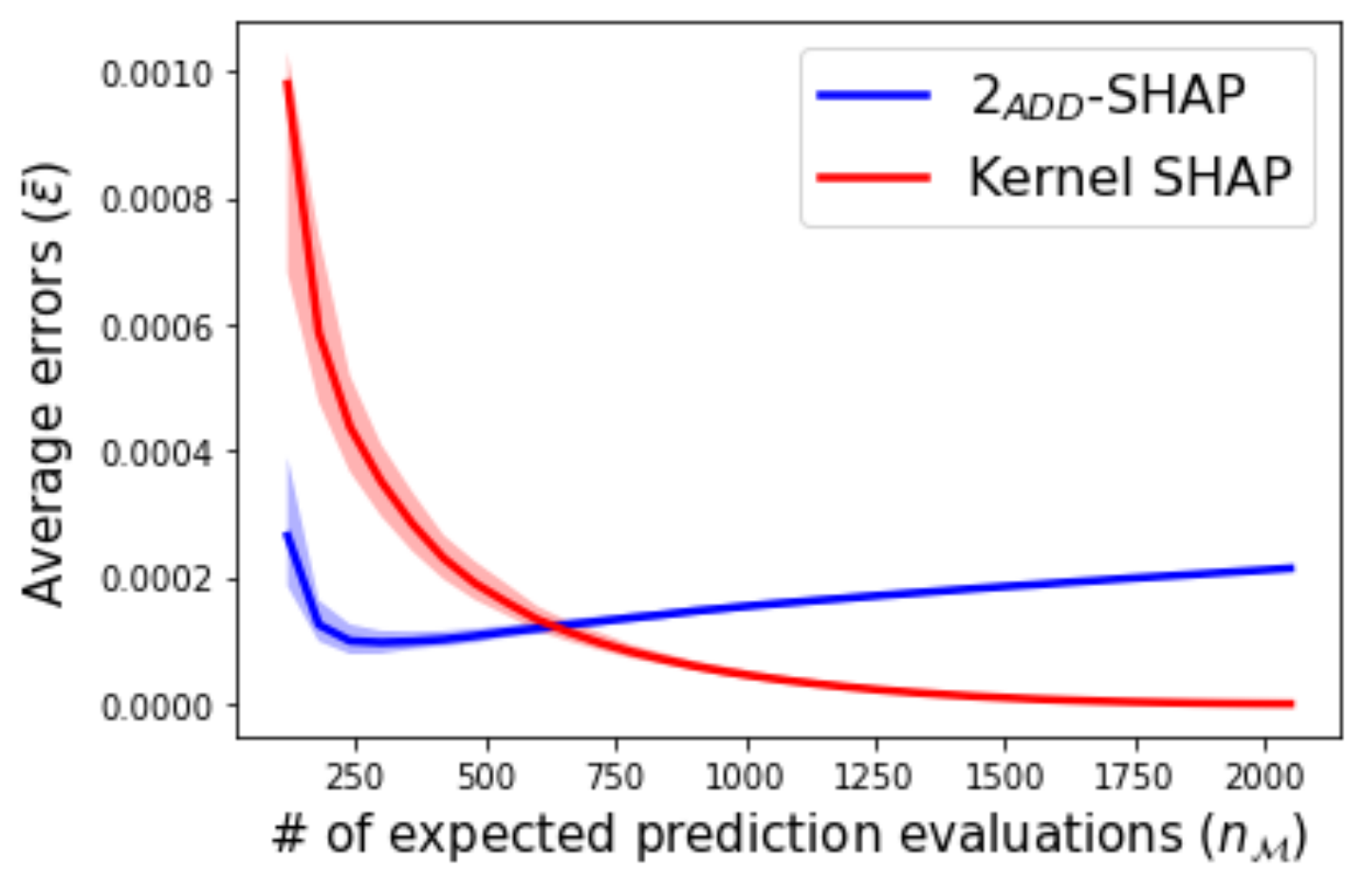}
\label{fig:convergence_wine_2add_mlp}}
\hfil
\subfloat[Random Forest ($score \approx 0.79$).]{\includegraphics[width=3.0in]{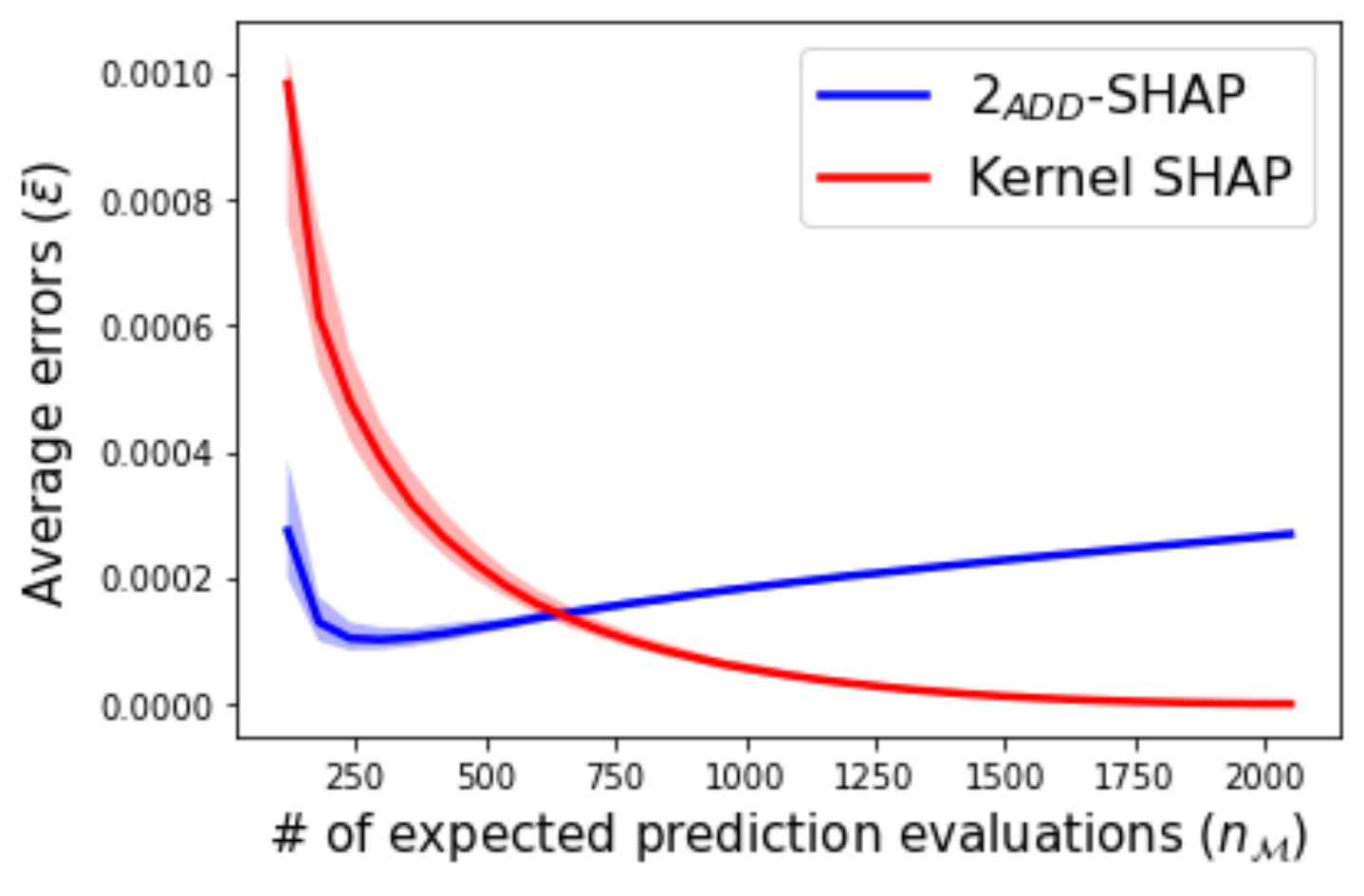}
\label{fig:convergence_wine_2add_rf}}
\hfill
\subfloat[Neural Network ($score \approx 0.73$).]{\includegraphics[width=3.0in]{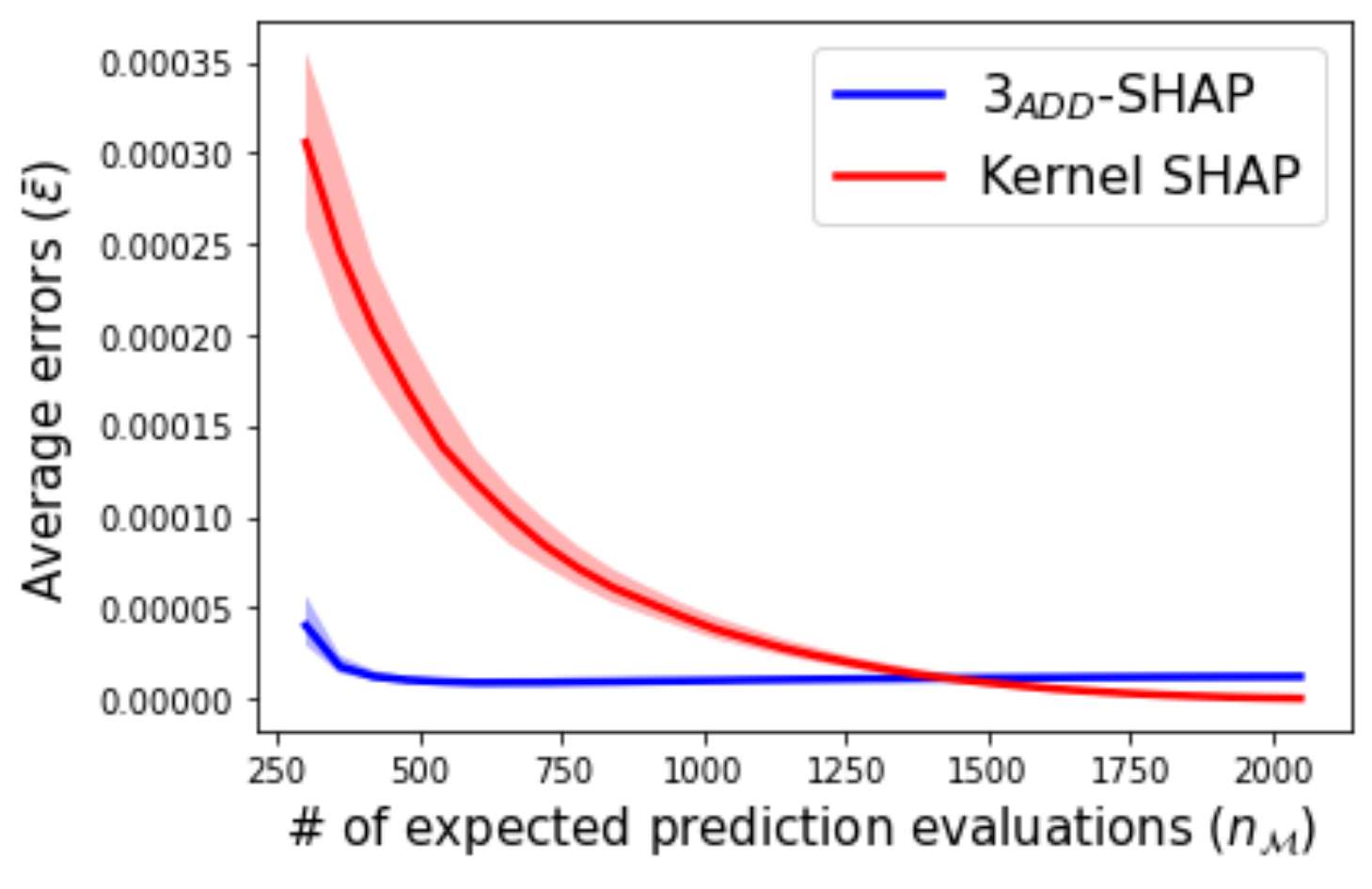}
\label{fig:convergence_wine_3add_mlp}}
\hfil
\subfloat[Random Forest ($score \approx 0.79$).]{\includegraphics[width=3.0in]{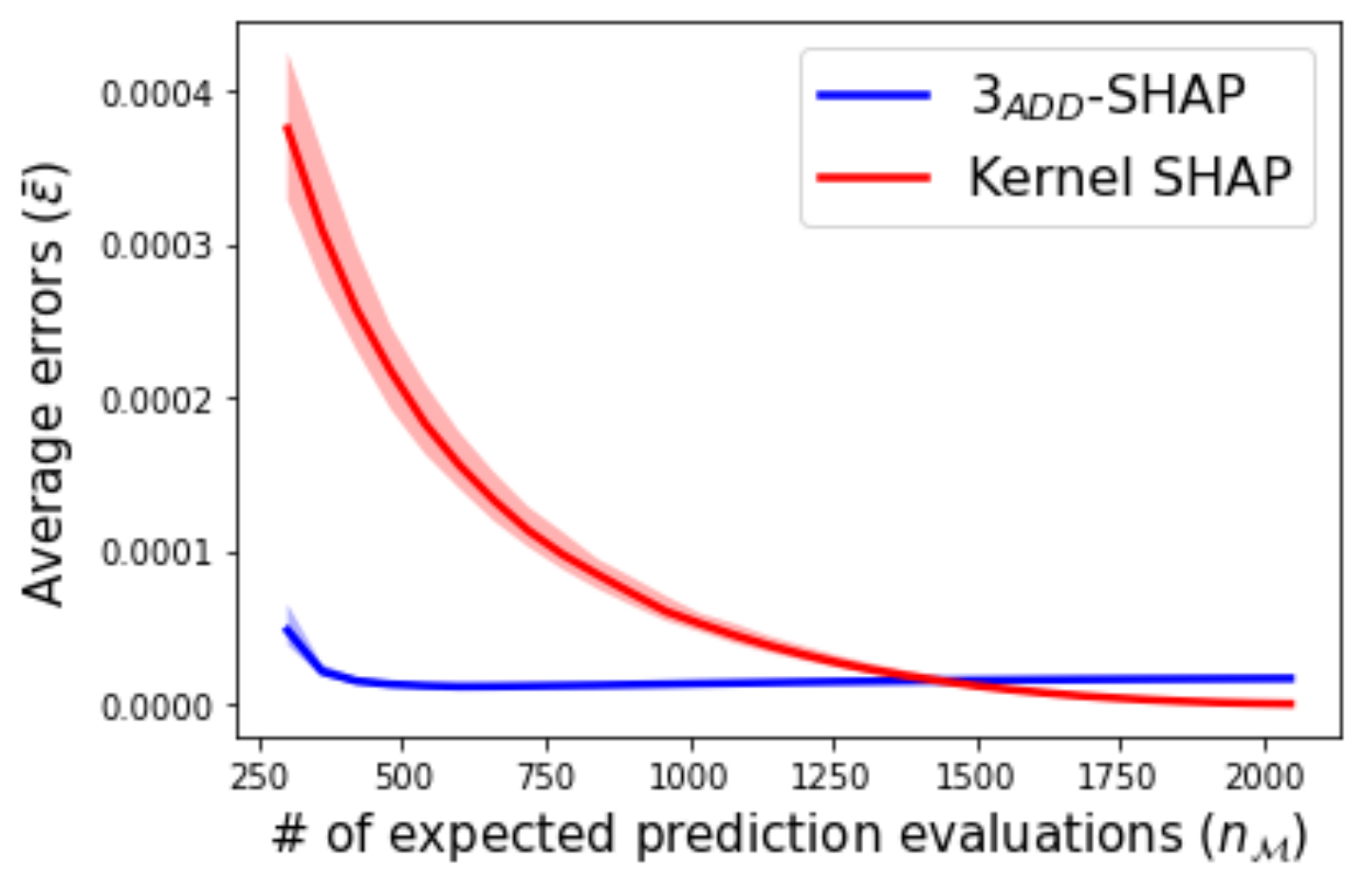}
\label{fig:convergence_wine_3add_rf}}
\hfil
\caption{Comparison between the convergence of $2_{ADD}$-SHAP, $3_{ADD}$-SHAP and Kernel SHAP varying $n_{\mathcal{M}}$, i.e., the number of coalitions selected to calculate the expected prediction evaluations (Red Wine dataset). For both Neural Network and Random Forest, the $score$ indicates the accuracy given the test samples.}
\label{fig:convergence_wine}
\end{figure}

\subsection{Experiment comparing the obtained SHAP values}
\label{subsec:exp2}

In this experiment, we compare the obtained SHAP values with the exact ones. For an instance of interest among the test data, we use the previous experiment and select the SHAP values that lead to the median error over all the simulations. For ease of visualization, we only plotted the five attributes that contribute the most (either positively or negatively) according to the exact SHAP values. As an illustrative example and without loss of generality, we selected a test sample $\mathbf{x}^*$ from the Diabetes dataset that has the attributes values described in Table~\ref{tab:diabetes_samp} (recall that this dataset is already centered with zero mean). The predicted measure of diabetes progression is equal to 84, which is less than the overall expected prediction provided by both Neural Networks and Random Forest (154.92 and 153.81, respectively). This means that the SHAP values help to explain, for the instance of interest $\mathbf{x}^*$, how each attribute value contributes to decrease the diabetes progression measure from the overall prediction until the actual 84.

\begin{table}[!h]
  \begin{center}
  \caption{Summary of the selected test sample - Diabetes dataset.}\label{tab:diabetes_samp}
  {
  \renewcommand{\arraystretch}{1.0}
	\small
  \begin{tabular}{cccccccc}
	\cline{1-2}\cline{4-5}\cline{7-8}
\textbf{Attributes} & \textbf{Values} &  & \textbf{Attributes} & \textbf{Values} &  & \textbf{Attributes} & \textbf{Values} \\
		\cline{1-2}\cline{4-5}\cline{7-8}
		age  & $0.009$ &  & blood serum 1 & $0.099$ &  & blood serum 5 & $-0.021$ \\
		\cline{1-2}\cline{4-5}\cline{7-8}
		sex  & $-0.045$ &  & blood serum 2 & $0.094$ &  & blood serum 6 & $0.007$ \\
		 \cline{1-2}\cline{4-5}\cline{7-8}
		body mass index & $-0.024$ &  & blood serum 3 & $0.071$ &  &  &  \\
		\cline{1-2}\cline{4-5}\cline{7-8}
		average blood pressure & $-0.026$ &  & blood serum 4 & $-0.002$ &  &  &  \\
		\cline{1-2}\cline{4-5}\cline{7-8}
  \end{tabular}
  }
  \end{center}
\end{table}

Figure~\ref{fig:shapley_diabetes} presents the estimated SHAP values when using $n_{\mathcal{M}}=290$, $n_{\mathcal{M}}=590$, $n_{\mathcal{M}}=890$ different coalitions of attributes to calculate the expected prediction evaluations. As a first remark, we note that the estimated SHAP values (specially the illustrated five ones) for the Neural Network (Figures~\ref{fig:shapley_diabetes_5_290_mlp},~\ref{fig:shapley_diabetes_5_590_mlp} and~\ref{fig:shapley_diabetes_5_890_mlp}) practically do not change regardless the number of predicted evaluations. All approaches led to very small errors, i.e., they could rapidly approximate the exact SHAP values associated with the Neural Networks model. For the Random Forest, we see that the contributions provided by the $3_{ADD}$-SHAP are close to the exact ones even with small number of predicted evaluations (see Figure~\ref{fig:shapley_diabetes_5_290_rf}). As one increases the number of evaluations, the Kernel SHAP converges to the exact SHAP values.

\begin{figure}[!h]
\centering
\subfloat[Neural Network, $n_{\mathcal{M}}=290$.]{\includegraphics[width=3.1in]{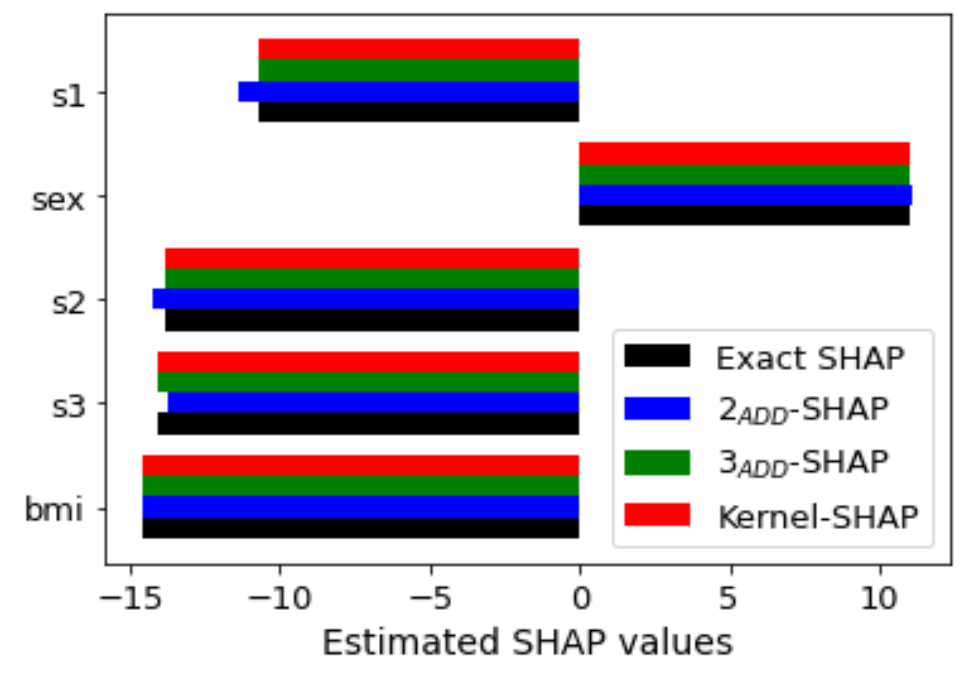}
\label{fig:shapley_diabetes_5_290_mlp}}
\hfil
\subfloat[Random Forest, $n_{\mathcal{M}}=290$.]{\includegraphics[width=3.1in]{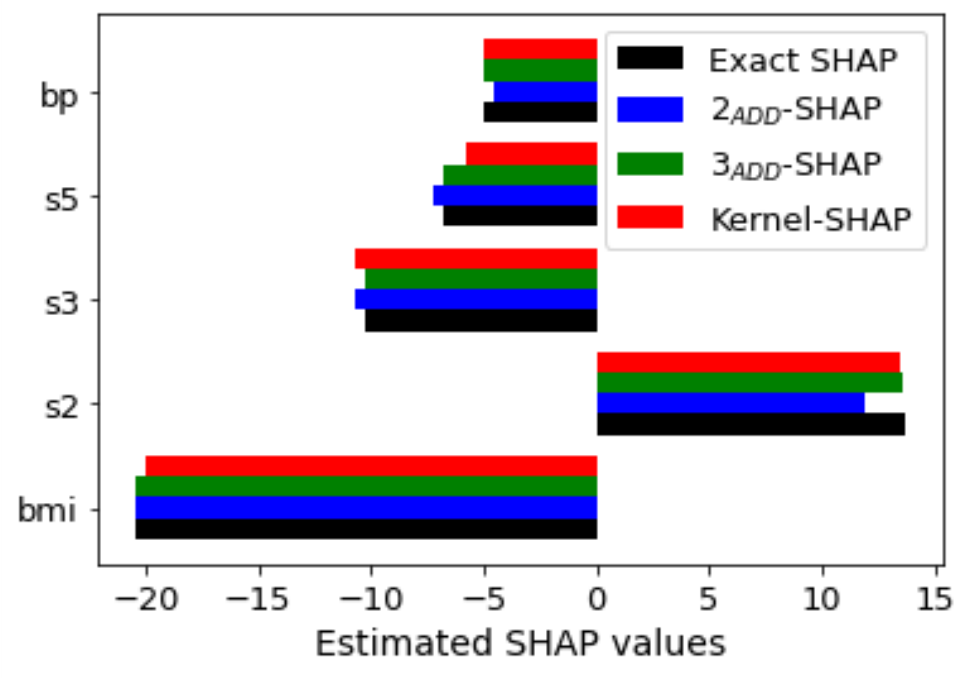}
\label{fig:shapley_diabetes_5_290_rf}}
\hfill
\subfloat[Neural Network, $n_{\mathcal{M}}=590$.]{\includegraphics[width=3.1in]{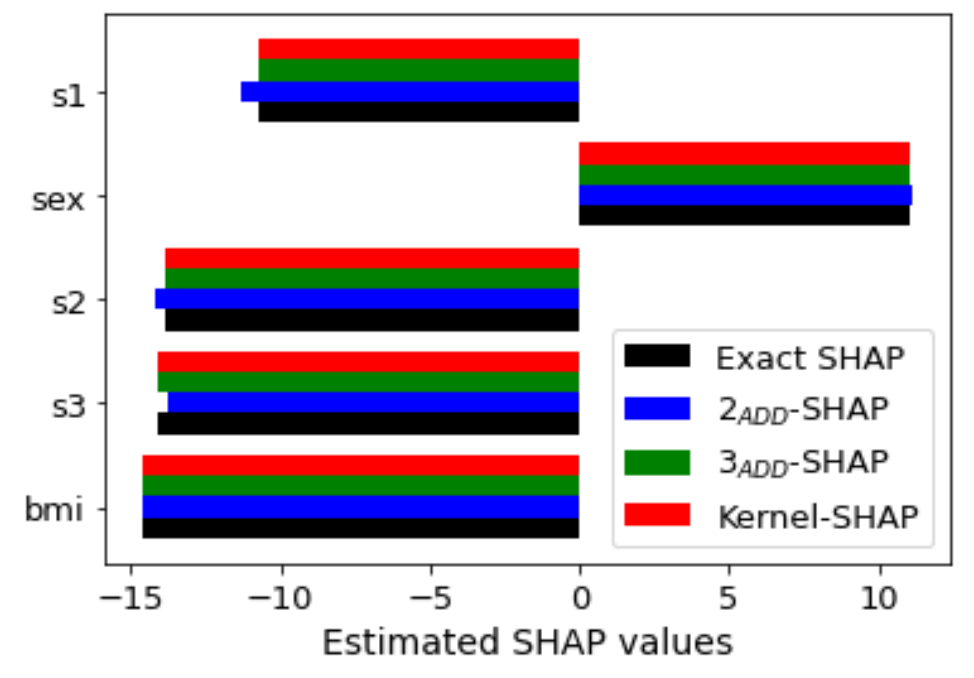}
\label{fig:shapley_diabetes_5_590_mlp}}
\hfil
\subfloat[Random Forest, $n_{\mathcal{M}}=590$.]{\includegraphics[width=3.1in]{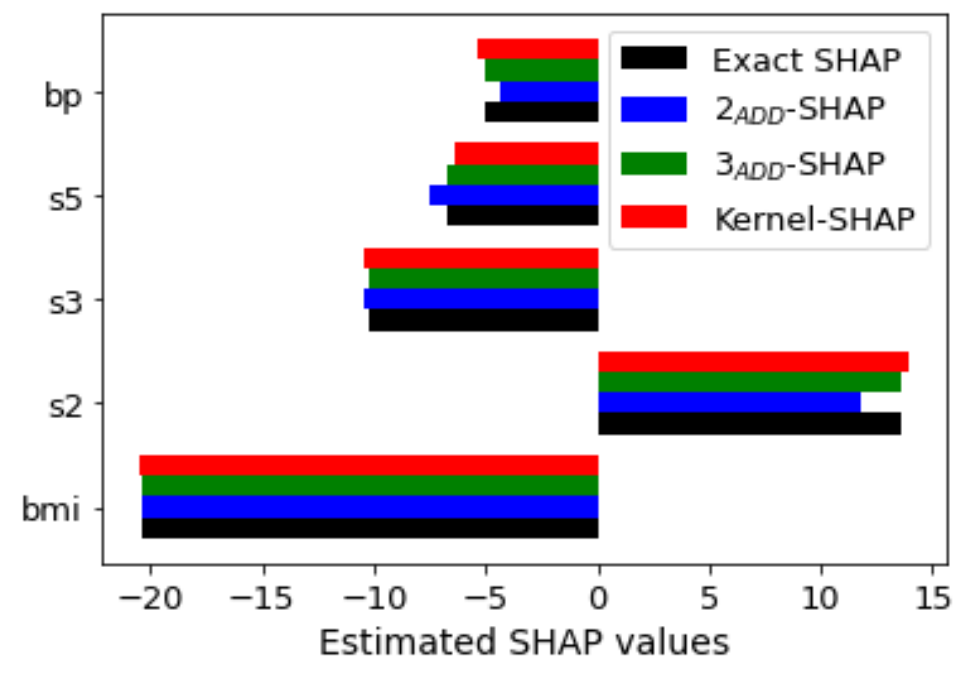}
\label{fig:shapley_diabetes_5_590_rf}}
\hfil
\subfloat[Neural Network, $n_{\mathcal{M}}=890$.]{\includegraphics[width=3.1in]{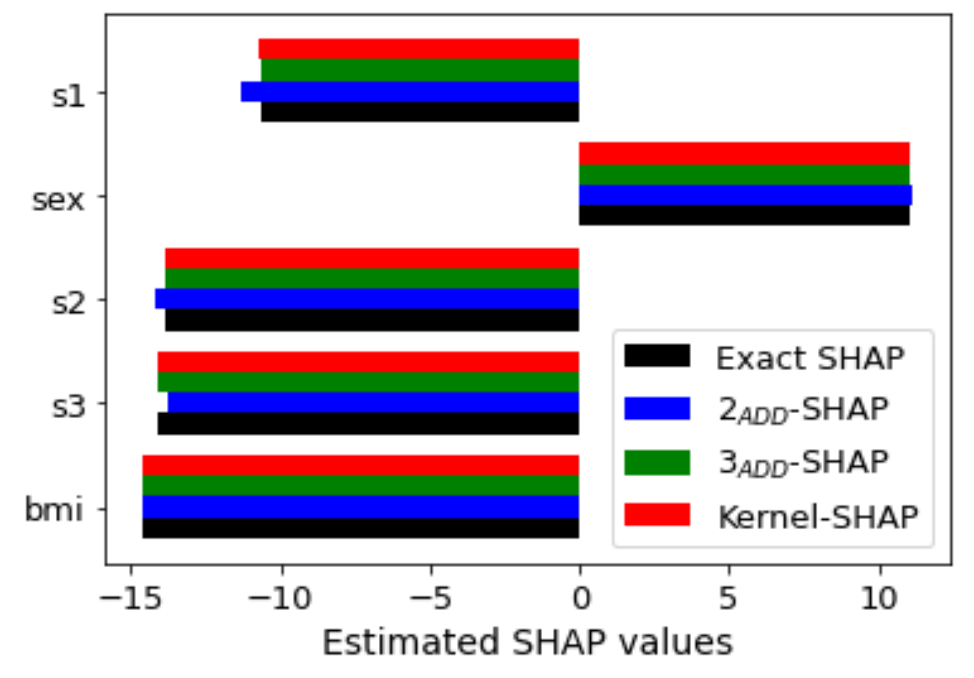}
\label{fig:shapley_diabetes_5_890_mlp}}
\hfil
\subfloat[Random Forest, $n_{\mathcal{M}}=890$.]{\includegraphics[width=3.1in]{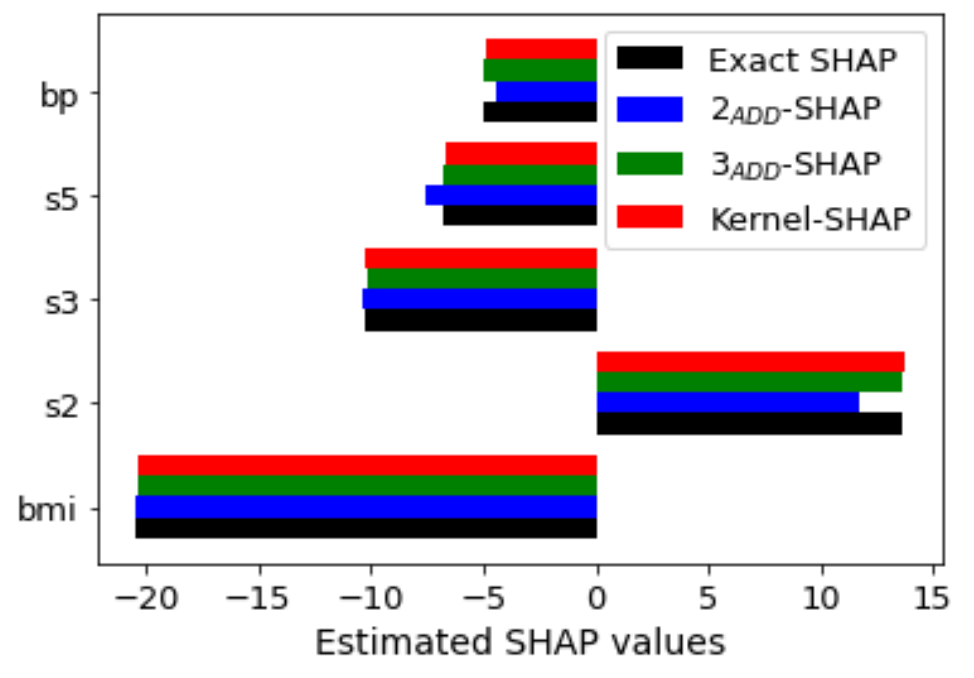}
\label{fig:shapley_diabetes_5_890_rf}}
\hfil
\caption{Comparison between the estimated SHAP values provided by the $2_{ADD}$-SHAP, $3_{ADD}$-SHAP and Kernel SHAP for different machine learning models and varying the number of coalitions used to calculate the expected prediction evaluations (Diabetes dataset).}
\label{fig:shapley_diabetes}
\end{figure}

Regarding the Red Wine dataset, we selected as an illustrative example a test sample classified as a good wine. The attributes values described in Table~\ref{tab:wine_samp}. The overall expected probability prediction for class 1 (good wine) for both Neural Networks and Random Forest is approximately 0.53. In this case, the SHAP values indicates the contributions of attributes that increase the probability of being classified as a good wine from the overall expected probability until the actual classification (class value equals to 1).

\begin{table}[!h]
  \begin{center}
  \caption{Summary of the selected test sample - Red Wine dataset.}\label{tab:wine_samp}
  {
  \renewcommand{\arraystretch}{1.0}
	\small
  \begin{tabular}{cccccccc}
	\cline{1-2}\cline{4-5}\cline{7-8}
\textbf{Attributes} & \textbf{Values} &  & \textbf{Attributes} & \textbf{Values} &  & \textbf{Attributes} & \textbf{Values} \\
		\cline{1-2}\cline{4-5}\cline{7-8}
		fixed acidity  & $9.4$ &  & chlorides & $0.08$ &  & pH & $3.15$ \\
		\cline{1-2}\cline{4-5}\cline{7-8}
		volatile acidity  & $0.3$ &  & free sulfur dioxide & $6$ &  & sulphates & $0.92$ \\
		 \cline{1-2}\cline{4-5}\cline{7-8}
		citric acid & $0.56$ &  & total sulfur dioxide & $17$ &  & alcohol & $11.7$ \\
		\cline{1-2}\cline{4-5}\cline{7-8}
		residual sugar & $2.8$ &  & density & $0.9964$ &  &  &  \\
		\cline{1-2}\cline{4-5}\cline{7-8}
  \end{tabular}
  }
  \end{center}
\end{table}

Figure~\ref{fig:shapley_wine} presents the estimated SHAP values when using $n_{\mathcal{M}}=420$, $n_{\mathcal{M}}=1020$ and $n_{\mathcal{M}}=1800$ coalitions of attributes. As in the previous dataset, we can see that, even for a reduced number of samples, the $3_{ADD}$-SHAP converges faster to the exact SHAP values. When the number of expected prediction evaluations increases, the Kernel SHAP converges to the exact SHAP values while the $2_{ADD}$-SHAP slightly diverges.

\begin{figure}[!h]
\centering
\subfloat[Neural Networks, $n_{\mathcal{M}}=420$.]{\includegraphics[width=3.1in]{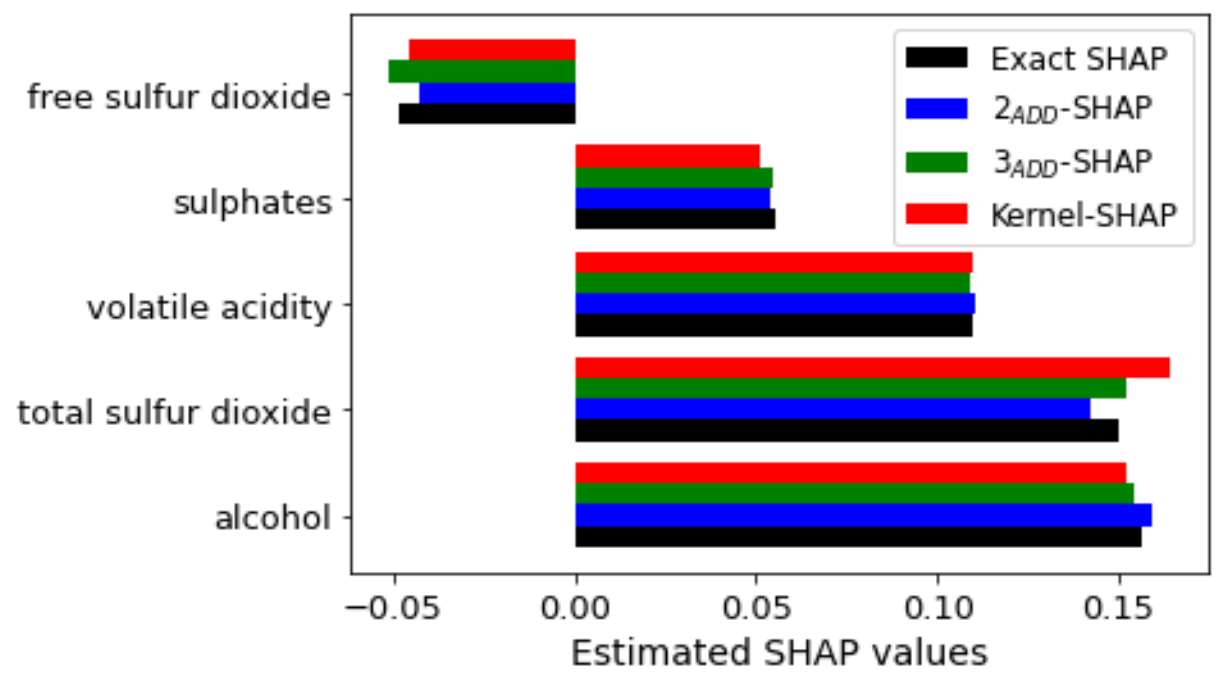}
\label{fig:shapley_wine_18_420_mlp}}
\hfil
\subfloat[Random Forest, $n_{\mathcal{M}}=420$.]{\includegraphics[width=3.1in]{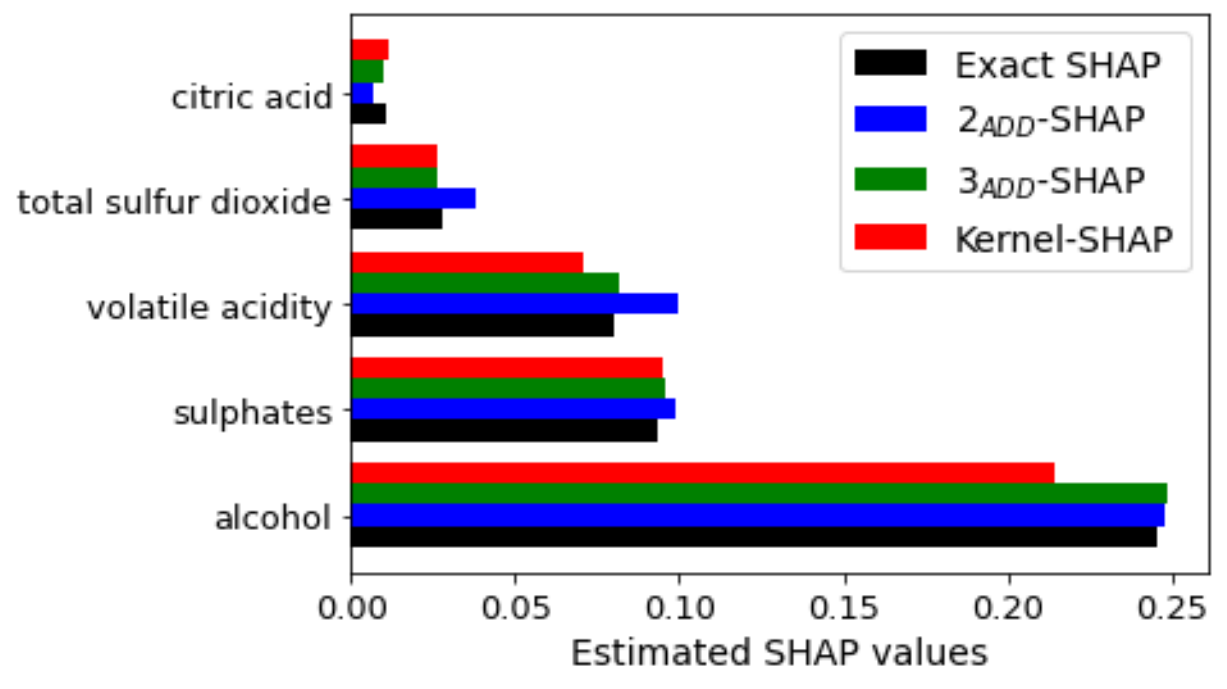}
\label{fig:shapley_wine_18_420_rf}}
\hfill
\subfloat[Neural Networks, $n_{\mathcal{M}}=1020$.]{\includegraphics[width=3.1in]{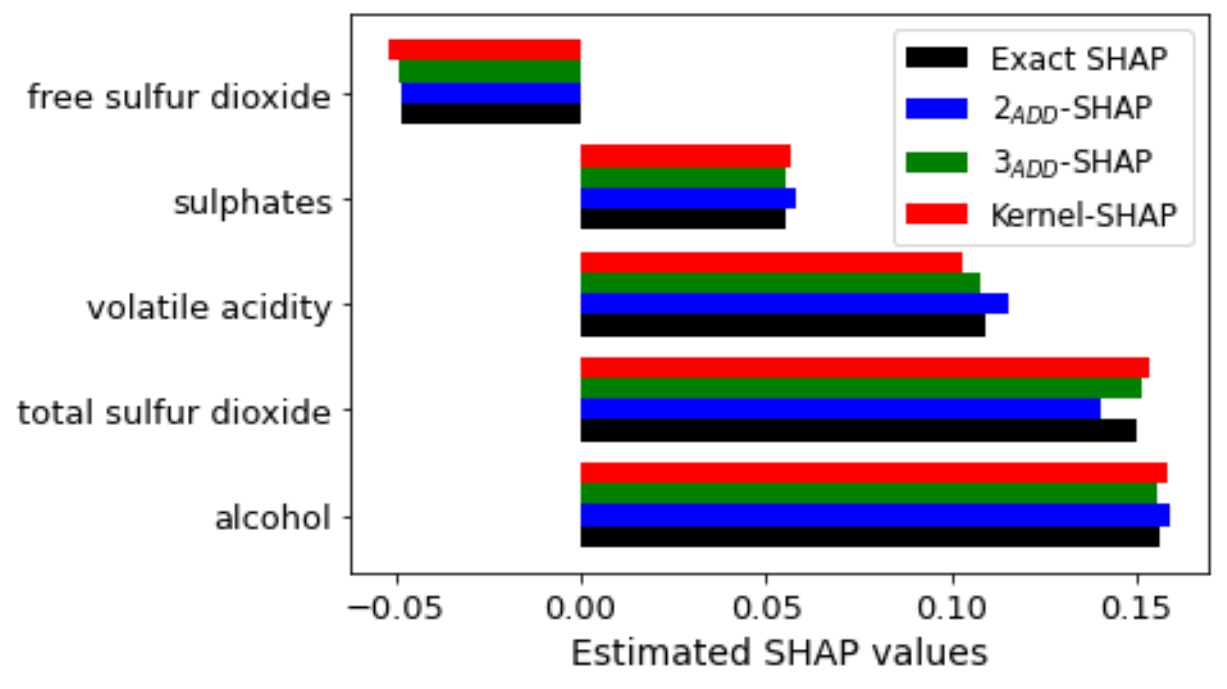}
\label{fig:shapley_wine_18_1020_mlp}}
\hfil
\subfloat[Random Forests, $n_{\mathcal{M}}=1020$.]{\includegraphics[width=3.1in]{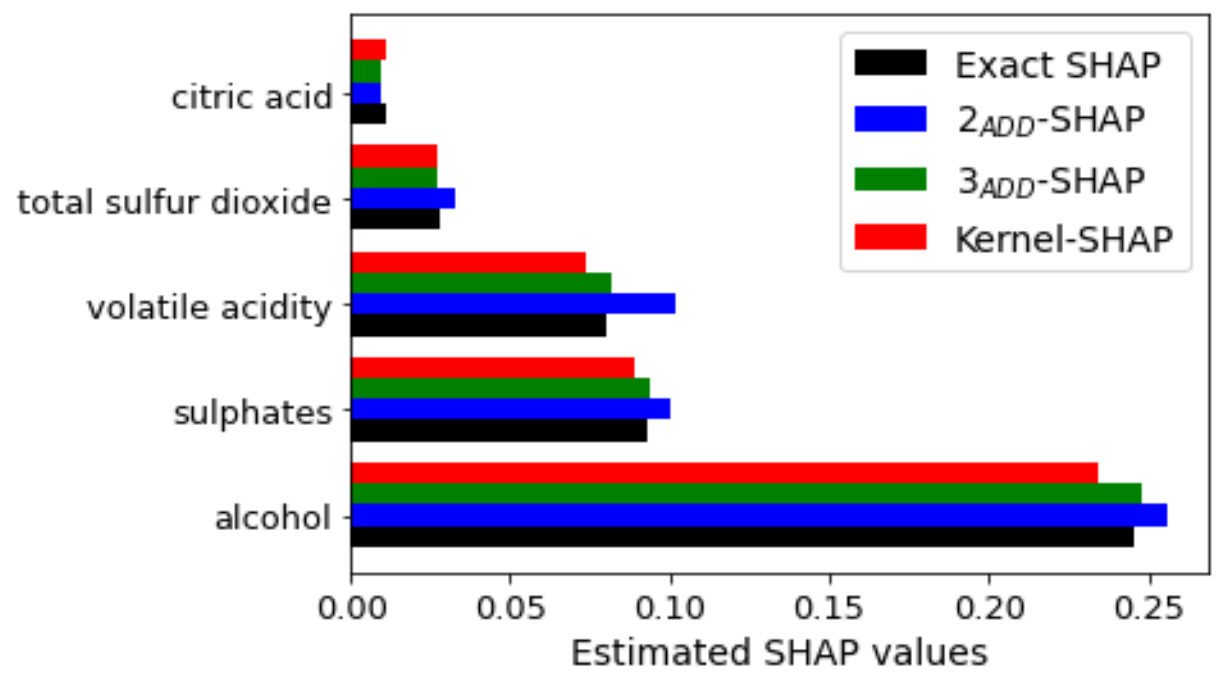}
\label{fig:shapley_wine_18_1020_rf}}
\hfil
\subfloat[Neural Networks, $n_{\mathcal{M}}=1800$.]{\includegraphics[width=3.1in]{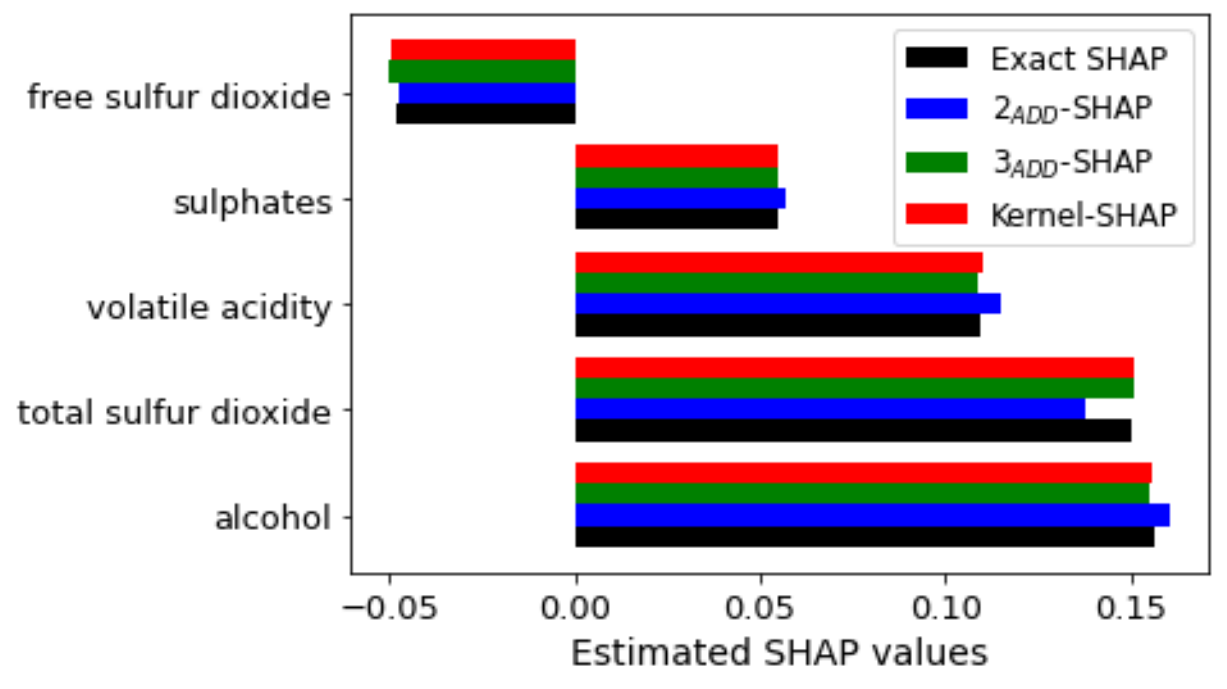}
\label{fig:shapley_wine_18_1800_mlp}}
\hfil
\subfloat[Random Forest, $n_{\mathcal{M}}=1800$.]{\includegraphics[width=3.1in]{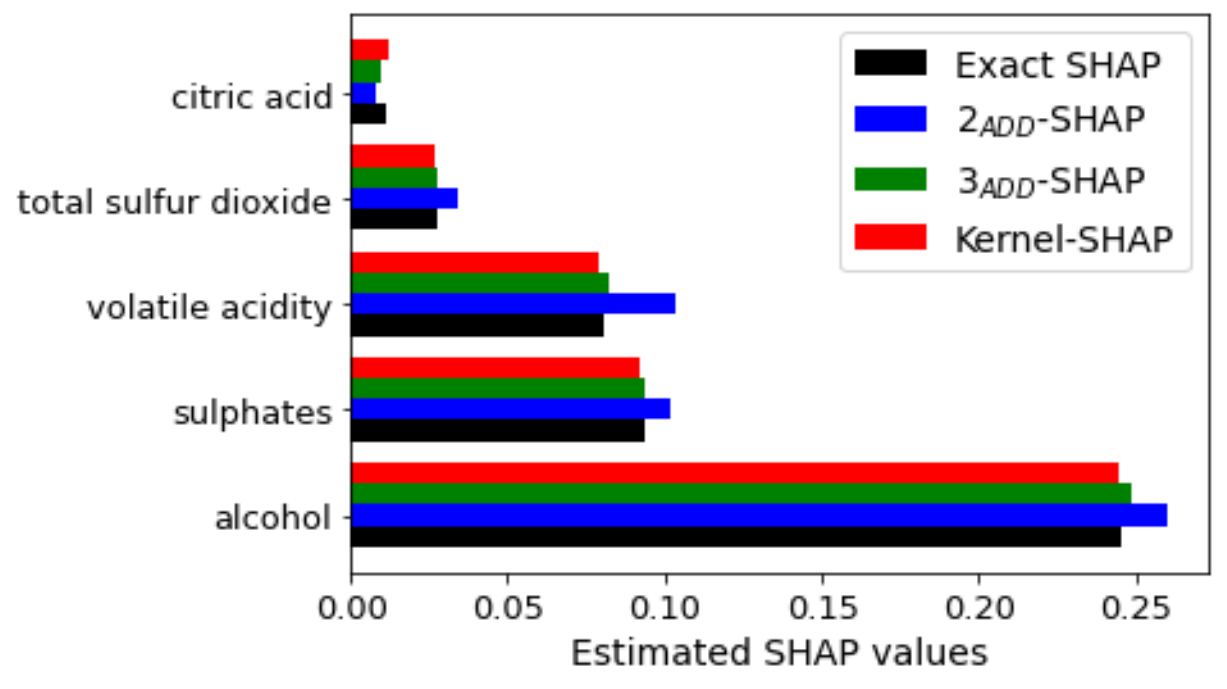}
\label{fig:shapley_wine_18_1800_rf}}
\hfil
\caption{Comparison between the estimated SHAP values provided by the $2_{ADD}$-SHAP, $3_{ADD}$-SHAP and Kernel SHAP for different machine learning models and varying the number of coalitions used to calculate the expected prediction evaluations (Red Wine dataset).}
\label{fig:shapley_wine}
\end{figure}

\subsection{Illustrative example and results visualization}
\label{subsec:exp3}


The purpose of this last experiment is to apply our proposal to visualize the attributes contribution towards the actual predicted outcome. We use as an illustrative example the Red Wine dataset and applied the $3_{ADD}$-SHAP. We also consider the test sample used in the previous experiment, which is classified as a good wine. Based on 1500 predicted evaluations and using the Random Forest, the contributions of attributes are presented in Figure~\ref{fig:example_wine_shapley}. Note that there are three attributes that contribute the most into the predicted outcome: alcohol, sulphates and volatile acidity. They are all positively contributing to predict the sample as a good wine.

\begin{figure*}[h!t]
\begin{centering}
\includegraphics[width=0.60\textwidth]{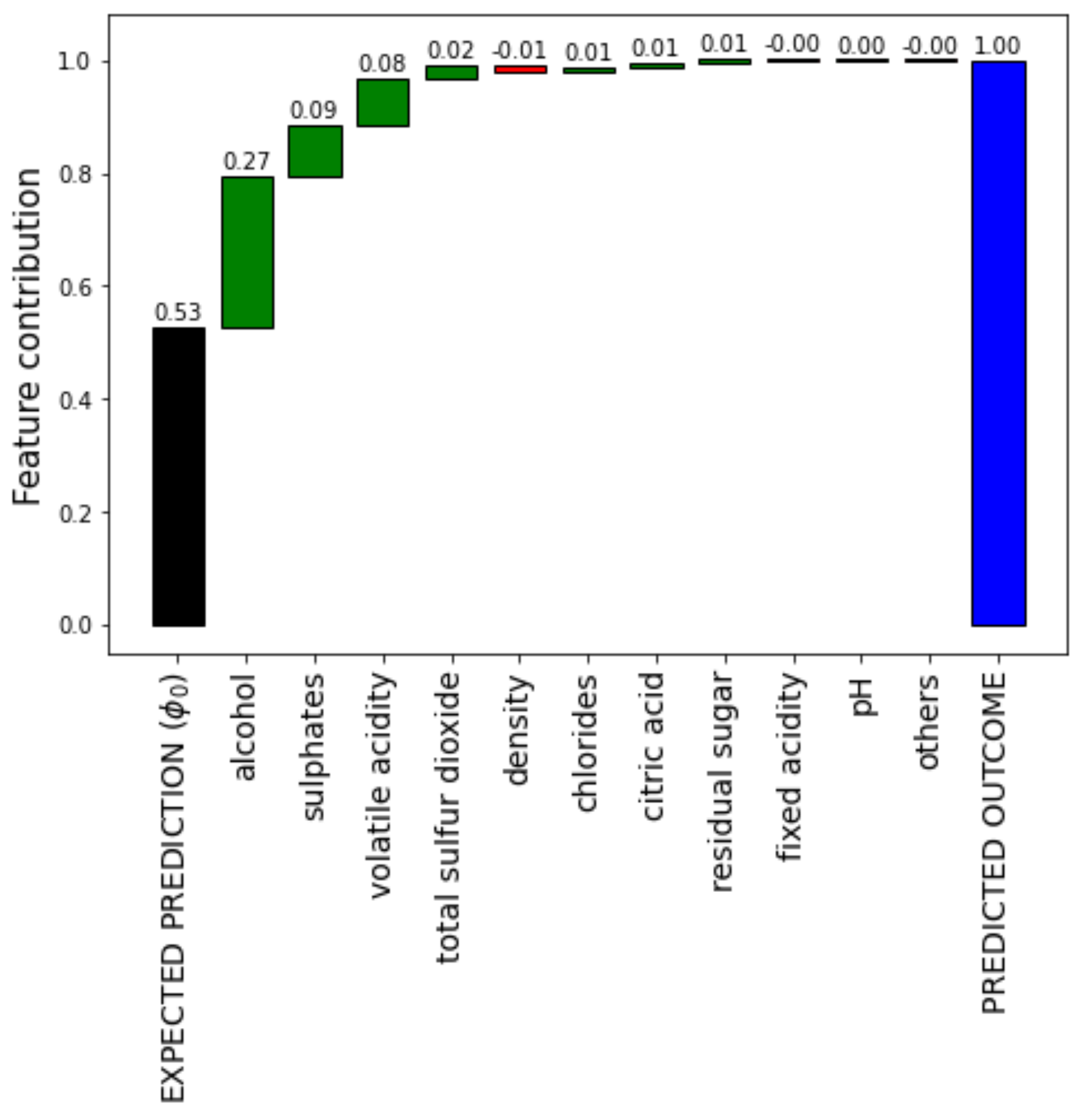} 
\par\end{centering}
\centering{}\caption{Attributes contribution towards the predicted outcome - $3_{ADD}$-SHAP and Red Wine dataset.
\label{fig:example_wine_shapley}}
\end{figure*}

Recall that, more than the contribution of features, our proposal automatically provides the interaction degree between them. We highlight that these interaction effects do not come up with the original Kernel SHAP formulation. Indeed, further adaptations must be made in Kernel SHAP in order to retrieve the interaction effects~\citep{Lundberg2020}. Figure~\ref{fig:example_wine_interaction} shows the interaction degree between attributes for the considered test sample. It indicates that, although volatile acidity, sulphates and alcohol (attributes 1, 9 and 10, respectively) contributes the most to the predicted outcome, there are negative interactions between alcohol and both volatile acidity and sulphates. This suggests that there are some redundancies between alcohol and the other two attributes when predicting the sample as a good wine.

\begin{figure*}[h!t]
\begin{centering}
\includegraphics[width=0.50\textwidth]{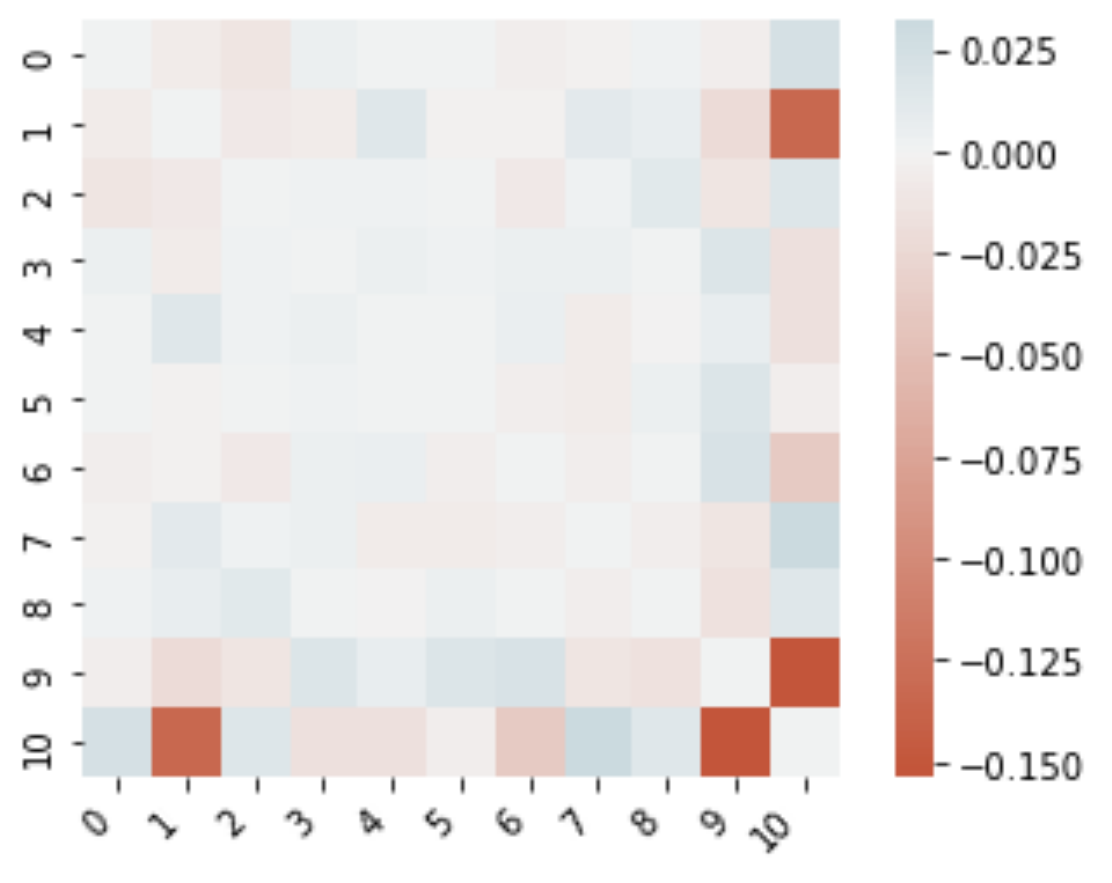} 
\par\end{centering}
\centering{}\caption{Interaction degree between attributes - $3_{ADD}$-SHAP and Red Wine dataset.
\label{fig:example_wine_interaction}}
\end{figure*}

\section{Conclusions and future perspectives}
\label{concl}

Interpretability in machine learning has become as important as accuracy in real problems. For instance, even if there is a correct classification (e.g., a denied credit), the explanation about how this result was achieved is required to ensure the model trustfulness. A very famous model-agnostic algorithm for machine learning interpretability is the SHAP method. Based on the Shapley values, the SHAP method indicates the contribution of each attribute in the predicted outcome. For this purpose, we look at the machine learning task as a cooperative game theory problem and calculate the marginal contribution of each attribute by taking the predicted outcomes of all possible coalitions of attributes. A point of attention in this calculation is that, as the number of predicted outcomes evaluations exponentially increases with the number of attributes, one may not be able to obtain the exact SHAP values.

In order to reduce the computational effort of SHAP method, the Kernel SHAP emerges as a clever strategy to approximate the SHAP values. However, its formulation is not easy to follow and any further considerations about the modeled game is assumed when approximating the SHAP values. In this paper, we first proposed a straightforward Choquet integral-based formulation for local interpretability. As the parameters used in the Choquet integral are directly associated with the Shapley values, our formulation also leads to the SHAP values. Therefore, we can also exploit the benefits of the SHAP values when interpreting local predictions. Moreover, our formulation also provides the interaction effects between attributes without further adaptations in the algorithm. Therefore, we can interpret the marginal contribution of each attribute towards the predicted outcome and how they interact between them.

As a second contribution, we exploit the concept of $k$-additive games. The use of $k$-additive models has revealed to be useful in multicriteria decision making problems in order to reduce the number of parameters in capacity-based aggregation functions (such as the Choquet integral) while keeping a good level of flexibility in data modeling. Therefore, as attested in the numerical experiments, when adopting $k$-additive games (specially the $3$-additive, which leads to the proposed $3_{ADD}$-SHAP), we could approximate the SHAP values using less predicted outcomes evaluations in comparison with the Kernel SHAP. As one reduced the number of parameters in the Choquet integral formulation, one avoided over-parametrization in scenarios with a low number of predicted outcomes evaluations. On the other hand, as we restricted the modeling data domain, in the scenario with all evaluations the proposed $k_{ADD}$-SHAP may slightly diverge from the exact SHAP values. However, as could be seen in the experiments, this difference is very low (mainly for the $3_{ADD}$-SHAP) and it does not affect the interpretability.

Future works include to extend the proposed approach when assuming that the attributes are dependent. In such a scenario, the formulation could be adjusted in order to better approximate the Shapley values~\citep{Aas2021}. Another perspective consists in evaluating the use of other game-based aggregation functions to deal with local interpretability. However, as some of them do not ensure the efficiency property, one must be careful in how one can apply them in the context of machine learning in a way that the feature attribution makes sense for local or global interpretability.

\section*{Acknowledgments}
Work supported by S\~{a}o Paulo Research Foundation (FAPESP) under the grants \#2020/09838-0 (BI0S - Brazilian Institute of Data Science), \#2020/10572-5 and \#2021/11086-0.

\section*{Appendix A}

We here describe the desired properties satisfied by SHAP values, which are derived from the Shapley values properties~\citep{Shapley1953,Young1985}. Recall that $f(\mathbf{x})$ is the predicted outcome of a trained model $f(\cdot)$, $\mathbf{x}$ is the instance to be explained and $\mathbf{z}'$ is a binary vector. The proofs are provided in the original SHAP paper~\citep{Lundberg2017}.

\begin{Properties}
  \item \textbf{Local accuracy (or efficiency)} \\
  \begin{equation}
  f(\mathbf{x}) = \phi_0 + \sum_{j=1}^m \phi_j(f,\mathbf{x})
  \end{equation}
  The local accuracy property states that the predicted outcome $f(\mathbf{x})$ can be decomposed by the sum of the SHAP values and the overall expected prediction $\phi_0$.
  
  \item \textbf{Missingness} \\
  If, for all subset of attributes represented by the coalition $\mathbf{z}'$,
  \begin{equation}
  f\left( h_{\mathbf{x}}(\mathbf{z}')\right) = f\left( h_{\mathbf{x}}(\mathbf{z}'\backslash j)\right),
  \end{equation}
  then $\phi_j(f,\mathbf{x}) = 0$. This property states that, if adding attribute $j$ into the coalition the expected prediction remains the same, the marginal contribution of such an attribute is null.
  
  \item \textbf{Consistency (or monotonicity)} \\
  For any two models $f(\cdot)$ and $f'(\cdot)$, if
  \begin{equation}
  f'\left( h_{\mathbf{x}}(\mathbf{z}')\right) - f'\left( h_{\mathbf{x}}(\mathbf{z}' \backslash j)\right) \geq f\left( h_{\mathbf{x}}(\mathbf{z}')\right) - f\left( h_{\mathbf{x}}(\mathbf{z}'\backslash j)\right)
  \end{equation}
  for any binary vector $\mathbf{z}' \in \left\{0,1\right\}^m$, then $\phi_j(f',\mathbf{x}) \geq \phi_j(f,\mathbf{x})$. The consistency property states that, if one changes the trained model and the contribution of an attribute $j$ increases or stays the same regardless of the other inputs, the marginal contribution of such an attribute should not decrease.
\end{Properties}

\bibliographystyle{model5-names}
\biboptions{authoryear}
\bibliography{_references_choquet}

\end{document}